\documentclass{article} % For LaTeX2e
\usepackage{iclr2027_conference,times}

\usepackage{amsmath,amsfonts,bm}

\def\figref#1{figure~\ref{#1}}
\def\Figref#1{Figure~\ref{#1}}

\def\secref#1{section~\ref{#1}}
\def\eqref#1{equation~\ref{#1}}
\def\1{\bm{1}}

\DeclareMathAlphabet{\mathsfit}{\encodingdefault}{\sfdefault}{m}{sl}
\SetMathAlphabet{\mathsfit}{bold}{\encodingdefault}{\sfdefault}{bx}{n}

\usepackage{hyperref}
\usepackage{url}

\usepackage{graphicx}

\usepackage{amsmath,amssymb,amsthm}

\newtheorem{lemma}{Lemma}
\newtheorem{proposition}{Proposition}

\usepackage{booktabs}
\usepackage{placeins}

\title{From Redundancy to Minimality: Fixed-Point-Guided Hierarchical Reduction of Learned Piecewise-Linear Dynamics}

\author{
Hiroto Tamura\textsuperscript{1,3}
\qquad
Gouhei Tanaka\textsuperscript{2,3} \\
\textsuperscript{1}Graduate School of Informatics,
Osaka Metropolitan University, Osaka, Japan \\
\textsuperscript{2}Graduate School of Engineering,
Nagoya Institute of Technology, Nagoya, Japan \\
\textsuperscript{3}International Research Center for Neurointelligence (IRCN),
The University of Tokyo, Tokyo, Japan \\
\texttt{h.tamura@omu.ac.jp}
}

\iclrfinalcopy 
\begin{document}

\maketitle
\fancyhead{}
\renewcommand{\headrulewidth}{0pt}

\begin{abstract}
Understanding a nonlinear dynamical system from time series requires not only reproducing its trajectories, but also identifying a simple representation that preserves its essential dynamical structure. 
Almost-linear recurrent neural networks (AL-RNNs) are piecewise-linear RNNs in which only a subset of units use ReLU nonlinearities, while the remaining units are linear, so that nonlinear capacity is explicitly
controlled by the number of ReLU units.
Their activation patterns define linear regions, represented as symbols, whose observed transitions form a symbolic transition graph. However, directly training AL-RNNs with few ReLU units to realize minimal dynamical representations can be unreliable.
We ask whether an AL-RNN with more ReLU units can instead be trained first and systematically reduced to a minimal dynamical representation. We introduce a fixed-point-guided hierarchical reduction procedure that progressively linearizes selected ReLU units, thereby merging neighboring linear regions and their corresponding graph nodes while preserving distinct symbols containing fixed points (FPs). 
The resulting reduction tree defines a hierarchy of progressively simpler candidate models.
Each reduced candidate is initialized from the parent parameters and retrained under guidance from the parent dynamics, providing a practical route from structural reduction to dynamical realization.
We also prove that reproducing $Q$ distinct fixed points requires at least $Q$ FP-containing symbols, providing a certificate of symbol-level minimality when this bound is attained.
On the 3-scroll Chua system, direct training with the theoretical minimum of three ReLU units achieves high-fidelity minimal realizations in only \(20\%\) of seeds, whereas our learn--reduce--retrain strategy increases the seed-macro success rate to approximately \(71\%\) at the same final
nonlinear capacity.
These results show that redundant nonlinear capacity can serve as a scaffold for discovering and realizing minimal dynamical
representations.
\end{abstract}

\section{Introduction}
\label{sec:introduction}

%% ============================================================
%% Paragraph 1: Broad problem / Why parsimony matters
Learning dynamical systems from time-series observations is a central problem in scientific machine learning.
When the governing equations of a system are unknown, recurrent neural networks (RNNs) can serve as flexible generative surrogates that reproduce complex nonlinear dynamics directly from data
\citep{durstewitz2023reconstructing}.
For scientific understanding, however, trajectory reconstruction alone is often insufficient:
many different models may reproduce similar observations while providing very different descriptions of the underlying dynamics.
A complementary goal is therefore to identify a \emph{parsimonious} representation, one that explains the observed dynamics using as few essential dynamical components as possible
\citep{brunton2016discovering,champion2019data}.
Such representations are attractive not only for interpretability,
but also for subsequent dynamical analysis and control, where reduced
structural complexity can be advantageous.

%% ============================================================
%% Paragraph 2: Fixed-point analysis and its limitations
% Fixed points provide a natural starting point for such a mechanistic description.
% Stable fixed points characterize attracting states, while unstable fixed points and saddles can organize transitions and shape trajectories far beyond their immediate neighborhoods
% \citep{sussillo2013opening}.
% Accordingly, fixed-point analysis has become an important tool for reverse-engineering trained RNNs.
% For generic nonlinear RNNs, fixed and slow points are commonly identified by numerically minimizing the residual of the recurrent dynamics
% \citep{sussillo2013opening,golub2018fixedpointfinder,smith2021reverse},
% while more systematic search strategies based on directional fibers have also been proposed
% \citep{katz2018directional}.
% However, locating fixed points does not by itself provide a global dynamical description:
% a collection of fixed points gives no direct correspondence between an arbitrary state and the fixed point around which its dynamics should be linearized, and Jacobian linearizations around fixed points are inherently local, with approximation errors accumulating over multi-step rollouts
% \citep{smith2021reverse}.

Fixed points provide a natural starting point for mechanistic analysis
of learned dynamics and have been widely used to reverse-engineer RNNs
\citep{sussillo2013opening,golub2018fixedpointfinder,smith2021reverse}.
Stable fixed points characterize attracting states, while unstable fixed points and saddles can organize transitions and shape trajectories far beyond their immediate neighborhoods \citep{sussillo2013opening}.
However, fixed-point analysis alone remains local and does not describe transitions between different local dynamics.

%% piecewise-linear models
Piecewise-linear recurrent neural networks (PL-RNNs) avoid these
ambiguities by using piecewise-linear nonlinearities, such as ReLU
units, to explicitly partition the state space into linear regions
\citep{durstewitz2017state,monfared2020existence}.
Each state is unambiguously assigned to a region whose affine dynamics are exact throughout that region, while candidate fixed points can be obtained analytically by solving a linear equation within each region.
Moreover, transitions across region boundaries induce a discrete sequence of visited linear regions, providing a symbolic description of how the learned dynamics evolve globally across the state space.
This combination of region-wise analytical tractability and explicit inter-region transitions makes PL-RNNs attractive for turning time-series reconstruction into an interpretable dynamical representation.

%% ============================================================
%% Paragraph 3: Closest predecessor -- AL-RNN
A key obstacle, however, is that a conventional PL-RNN with many nonlinear units can partition its state space into an enormous number of linear regions, making the resulting symbolic dynamics difficult to interpret.
Almost-linear RNNs (AL-RNNs) address this problem by restricting the ReLU nonlinearity to only \(P\) units while keeping the remaining recurrent state variables linear
\citep{brenner2024almostlinear}.
The parameter \(P\) therefore provides explicit control over the nonlinear complexity of the model.
By training AL-RNNs over different values of \(P\), previous work showed that surprisingly few nonlinear units can suffice to reconstruct complex chaotic systems and recover compact piecewise-linear descriptions of their dynamics
\citep{brenner2024almostlinear}.
More recently, AL-RNNs have also been used as a general probe for systematically identifying when recurrence nonlinearity is computationally necessary and what mechanisms it enables across sequence-modeling tasks
\citep{brenner2026nonlinearity}.
Together, these results establish AL-RNNs as a framework not only for reconstructing interpretable piecewise-linear dynamics, but also for controlling and dissecting the functional role of nonlinearity in recurrent computation.
% A compact dynamical model can therefore be sought by varying \(P\) and selecting an operating point from the resulting performance--complexity curve.

%% short ver.
% Almost-linear RNNs (AL-RNNs) address this problem by restricting the
% ReLU nonlinearity to only \(P\) units while keeping the remaining
% recurrent state variables linear \citep{brenner2024almostlinear}.
% Thus, \(P\) explicitly controls nonlinear complexity, and previous work
% showed that surprisingly few nonlinear units can suffice to reconstruct
% complex chaotic systems.

%% ============================================================
%% Paragraph 4: Gap -- structural relation + optimization barrier
Yet controlling the amount of nonlinearity does not by itself reveal how dynamical representations at different levels of complexity are structurally related, nor does it guarantee that a minimally nonlinear model can be reliably learned.
Existing AL-RNN approaches primarily address \emph{how much} nonlinearity is needed by independently training models with different values of \(P\)
\citep{brenner2024almostlinear,brenner2026nonlinearity}.
Consequently, there is no explicit correspondence between the linear regions or transitions of a successful higher-capacity model and those of a simpler one.
More importantly, reducing \(P\) also makes optimization increasingly difficult near the representational capacity boundary: a compact architecture may be theoretically sufficient to represent the target dynamics yet rarely be discovered by direct training.
Thus, simply searching over progressively smaller models conflates two distinct problems: determining the minimal dynamical structure and successfully optimizing a model that realizes it.
Rather than discarding an accurately trained higher-capacity model and repeatedly restarting optimization at smaller \(P\), we ask whether the large model itself can serve as a scaffold from which simpler representations can be identified and subsequently realized.
Specifically, \emph{can a higher-capacity AL-RNN reveal a hierarchy of increasingly compact dynamical representations, and can this hierarchy guide the reliable training of a minimal realization?}

%% ============================================================
%% Paragraph 5: Key insight + theory
We show that fixed points provide a principled constraint for this reduction from redundancy to minimality.
Each visited linear region can be classified according to whether its affine dynamics contain a fixed point; we refer to these two classes as \emph{fixed-point (FP) symbols} and \emph{fixed-point-free (FP-free) symbols}, respectively.
Under standard nondegeneracy conditions, distinct isolated fixed points of the reconstructed system require distinct FP symbols.
Thus, for a system with \(Q\) fixed points, any faithful piecewise-linear representation requires at least \(Q\) FP symbols.
These FP symbols form an irreducible fixed-point skeleton: merging regions associated with distinct fixed points necessarily destroys the fixed-point structure.
FP-free symbols, in contrast, are not protected by this lower bound and therefore constitute candidates for coarse-graining, although some may still be required to reproduce global transitions.

Building on this structure, we develop a fixed-point-guided hierarchical
reduction procedure for trained AL-RNNs.
We represent the learned dynamics as a transition graph whose nodes are
visited ReLU activation patterns and whose edges record transitions
between the corresponding linear regions.
For sufficiently resolved trajectories, transitions between adjacent
regions are associated with individual ReLU switches.
Linearizing one ReLU removes the boundaries induced by that unit and
therefore merges the corresponding regions in the symbolic graph.
We systematically enumerate distinct sequences of such linearizations,
while terminating any branch that would merge distinct FP symbols.
The result is a \emph{reduction tree}: its root is the original
high-resolution transition graph, and its descendants represent
progressively coarser dynamical descriptions.

Importantly, a reduced graph specifies a candidate low-complexity
dynamical structure, but does not by itself establish that the
corresponding reduced AL-RNN can reproduce the target dynamics.
The reduction tree records which ReLU units must be linearized to
realize each candidate structure, allowing us to construct the
corresponding reduced AL-RNN directly from its higher-capacity parent.
We then retrain these models from the parent parameters under guidance
from the parent dynamics and test whether the graph-derived structures
can be realized as faithful autonomous dynamics.
This learn--reduce--retrain procedure turns the higher-capacity
AL-RNN into a scaffold not only for identifying hierarchical and
minimal symbolic representations, but also for realizing compact models
that can be difficult to learn directly.

Prior AL-RNN work \citep{brenner2024almostlinear} demonstrated that direct low-capacity training can
robustly recover topologically minimal representations of the Lorenz-63 \citep{lorenz1963deterministic} and R\"ossler \citep{rossler1976equation} systems.
Here, we focus on the more challenging 3-scroll Chua system \citep{tang2001generation}, whose five target fixed points imply lower bounds of five visited symbols and three ReLU units.
At this theoretical minimum of three ReLU units, direct training produces high-fidelity minimal realizations in only \(20\%\) of seeds.
In contrast, hierarchical reduction of reliably learned
higher-capacity models frequently reaches both the symbol-level and nonlinear-capacity lower bounds.
After reduction-guided retraining, the complete pipeline achieves approximately \(71\%\) seed-macro success for high-fidelity minimal realization at the same final nonlinear capacity, a \(3.5\times\) improvement over direct minimal-capacity training.
These results show that redundant nonlinear capacity can serve as both a structural and an optimization scaffold for discovering and realizing minimal dynamical representations.

Our main contributions are:
\begin{itemize}
    \item We derive a fixed-point lower bound on symbolic complexity:
    reproducing \(Q\) distinct target fixed points requires at least
    \(Q\) FP symbols, yielding a certificate of symbol-level
    minimality when all target fixed points are preserved and exactly
    this many symbols are visited.

    \item We introduce a fixed-point-guided hierarchical reduction
    procedure that reveals progressively coarser representations while
    preserving distinct FP symbols.

    \item We show that higher-capacity AL-RNNs can serve as scaffolds
    for minimal-model learning: the learn--reduce--retrain pipeline
    substantially improves the reliability of high-fidelity minimal
    realization over direct minimal-capacity training.
\end{itemize}

% ============================================================
% 2. PRELIMINARIES
% Target: ~0.6 page
% ============================================================

\section{AL-RNNs and Fixed-Point Structure}
\label{sec:preliminaries}

We first review the AL-RNN formulation and its symbolic
representation following \citet{brenner2024almostlinear}, and then
derive the fixed-point constraints that underlie our reduction method.

\subsection{AL-RNNs and symbolic dynamics}
\label{sec:alrnn}

We consider the Almost-Linear Recurrent Neural Network (AL-RNN)
\citep{brenner2024almostlinear},
\begin{equation}
    \mathbf{z}_{t}
    =
    \mathbf{A}\mathbf{z}_{t-1}
    +
    \mathbf{W}\boldsymbol{\phi}_{P}(\mathbf{z}_{t-1})
    +
    \mathbf{h},
    \label{eq:alrnn}
\end{equation}
where
\(\mathbf{z}_t\in\mathbb{R}^{M}\),
\(\operatorname{ReLU}(u):=\max(0,u)\), and
\(\boldsymbol{\phi}_{P}\) applies ReLU nonlinearities only to the last
\(P\) coordinates,
\begin{equation}
    \boldsymbol{\phi}_{P}(\mathbf{z})
    =
    [z_1,\ldots,z_{M-P},
      \operatorname{ReLU}(z_{M-P+1}),\ldots,
      \operatorname{ReLU}(z_M)]^\top .
    \label{eq:alrnn_activation}
\end{equation}
Thus, \(P\) controls the number of nonlinear switching variables while
the latent dimension \(M\) remains fixed.

Following the identity teacher-forcing formulation of \citet{brenner2024almostlinear}, the first \(N\) linear coordinates serve directly as readout units for the \(N\)-dimensional observed system. The model output is therefore
\begin{equation}
\hat{\mathbf{x}}_t = \mathcal{I}\mathbf{z}_t, \qquad \mathcal{I} =
\begin{bmatrix}
\mathbf{I}_N & \mathbf{0} 
\end{bmatrix} \in \mathbb{R}^{N\times M}, \label{eq:readout} 
\end{equation}
so that the readout units are part of the recurrent state rather than a separate output layer. The latent state therefore consists of \(N\) linear readout units, \(M-N-P\) additional linear units, and \(P\) nonlinear ReLU units.

The activation state of the nonlinear units defines a binary symbol
\begin{equation}
    \boldsymbol{\sigma}(\mathbf{z})
    =
    \bigl(
    \mathbb{I}[z_{M-P+1}>0],\ldots,
    \mathbb{I}[z_M>0]
    \bigr)
    \in\{0,1\}^{P},
    \label{eq:activation_pattern}
\end{equation}
which defines a linear region
\(\Omega_{\boldsymbol{\sigma}}\).
Within this region, the dynamics are exactly affine,
\begin{equation}
    \mathbf{z}_{t}
    =
    \mathbf{F}_{\boldsymbol{\sigma}}\mathbf{z}_{t-1}
    +
    \mathbf{h},
    \qquad
    \mathbf{F}_{\boldsymbol{\sigma}}
    =
    \mathbf{A}
    +
    \mathbf{W}\mathbf{D}_{\boldsymbol{\sigma}},
    \label{eq:regional_dynamics}
\end{equation}
where
\[
\mathbf{D}_{\boldsymbol{\sigma}}
=
\operatorname{diag}
(
\underbrace{1,\ldots,1}_{M-P},
\sigma_1,\ldots,\sigma_P
)
\in\mathbb{R}^{M\times M}
\]
is the diagonal activation matrix, with the first \(M-P\) entries fixed
to one for the linear coordinates.

For a generated trajectory of length \(T\), let
\begin{equation}
    \Sigma
    =
    \{
    \boldsymbol{\sigma}(\mathbf{z}_t)
    :1\leq t\leq T
    \}
    \label{eq:visited_symbols} \nonumber
\end{equation}
denote the set of visited linear regions, which we refer to as
\emph{symbols}.
\Figref{fig:fp_reduction_concept}(a) schematically illustrates this
piecewise-linear partition of the latent state space and a trajectory
moving across the resulting regions.

The symbolic trajectory induces a directed transition graph
\(G=(\Sigma,\mathcal{E})\), with an edge for each observed transition between
symbols, as illustrated in \figref{fig:fp_reduction_concept}(b).

We measure symbolic complexity by the number of visited symbols,
\(|\Sigma|\).
We refer to a representation attaining the minimum possible
\(|\Sigma|\) while reproducing the
target dynamics as a \emph{minimal piecewise-linear representation}.
This notion of minimality concerns the number of visited linear regions
rather than the number of nonlinear units \(P\).

% ============================================================
% 3. FIXED-POINT THEORY
% Target: ~0.6--0.7 page
% ============================================================

\subsection{Fixed-Point Theory}
\label{sec:fixed_point_theory}

For a linear region \(\Omega_{\boldsymbol{\sigma}}\), if
\(\mathbf{I}-\mathbf{F}_{\boldsymbol{\sigma}}\) is nonsingular, its
unique candidate fixed point is
\begin{equation}
    \mathbf{z}^{*}_{\boldsymbol{\sigma}}
    =
    \left(
        \mathbf{I}
        -
        \mathbf{F}_{\boldsymbol{\sigma}}
    \right)^{-1}
    \mathbf{h}. \label{eq:regional_fixed_point}
\end{equation}
We therefore define
\begin{align}
    \Sigma_{\mathrm{FP}}
    &:=
    \left\{
        \boldsymbol{\sigma}\in\Sigma
        \;\middle|\;
        \mathbf{z}^{*}_{\boldsymbol{\sigma}}
        \in
        \Omega_{\boldsymbol{\sigma}}
    \right\},
    \\
    \Sigma_{\mathrm{free}}
    &:=
    \Sigma\setminus\Sigma_{\mathrm{FP}}.
\end{align}
Symbols in \(\Sigma_{\mathrm{FP}}\) are called \emph{FP symbols}, whereas
those in \(\Sigma_{\mathrm{free}}\) are called \emph{FP-free symbols}.
% \Figref{fig:fp_reduction_concept}(a,b) illustrates this distinction
% and the corresponding symbolic transition graph.
\Figref{fig:fp_reduction_concept}(a) illustrates the distinction
between FP and FP-free symbols according to whether the corresponding
linear region contains a valid fixed point.

Let \(Q\) denote the number of distinct fixed points of the target
dynamical system that the learned piecewise-linear representation is
required to reproduce.
Here, ``preserving'' these fixed points means that the reconstructed
dynamics contain \(Q\) distinct fixed points corresponding to those of
the target system.

Assuming that these fixed points are isolated and lie in the interiors
of their corresponding regions, together with the nonsingularity
condition above, each linear region contains at most one valid fixed
point.
It follows that:

\begin{proposition}[Fixed-point lower bound]
\label{prop:fp_lower_bound}
% Any piecewise-linear representation reproducing the \(Q\) target fixed
% points satisfies
% %
% \begin{equation}
%     |\Sigma|
%     \geq
%     |\Sigma_{\mathrm{FP}}|
%     \geq
%     Q.
%     \label{eq:symbol_lower_bound}
% \end{equation}
Suppose that a piecewise-linear representation reproduces \(Q\)
distinct target fixed points and that each of these fixed points lies
in a visited region, so that its corresponding symbol belongs to
\(\Sigma_{\rm FP}\).
Then
\begin{equation}
    |\Sigma| \ge |\Sigma_{\rm FP}| \ge Q.\label{eq:symbol_lower_bound}
\end{equation}
\end{proposition}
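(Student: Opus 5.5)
The first inequality is immediate, since \(\Sigma_{\rm FP}\subseteq\Sigma\) by definition. For the second, I would build an injection from the set of \(Q\) reproduced fixed points into \(\Sigma_{\rm FP}\). Label the reproduced fixed points \(\mathbf{z}^{(1)},\dots,\mathbf{z}^{(Q)}\), pairwise distinct, and define \(\iota(q):=\boldsymbol{\sigma}(\mathbf{z}^{(q)})\). The hypothesis that each fixed point lies in a visited region whose symbol is an FP symbol says exactly that \(\iota\) maps \(\{1,\dots,Q\}\) into \(\Sigma_{\rm FP}\). Once \(\iota\) is shown to be injective, we get \(Q\le|\Sigma_{\rm FP}|\).

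To prove injectivity, suppose \(\iota(q)=\iota(q')=\boldsymbol{\sigma}\). Then both \(\mathbf{z}^{(q)}\) and \(\mathbf{z}^{(q')}\) lie in \(\Omega_{\boldsymbol{\sigma}}\). Since the map \eqref{eq:alrnn} coincides on \(\Omega_{\boldsymbol{\sigma}}\) with the affine map in \eqref{eq:regional_dynamics}, both points satisfy \((\mathbf{I}-\mathbf{F}_{\boldsymbol{\sigma}})\mathbf{z}=\mathbf{h}\). By the standing nonsingularity assumption this equation has the unique solution \(\mathbf{z}^{*}_{\boldsymbol{\sigma}}\) from \eqref{eq:regional_fixed_point}. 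Hence \(\mathbf{z}^{(q)}=\mathbf{z}^{(q')}\), and distinctness forces \(q=q'\).

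The only step that needs care is justifying that a fixed point of the full map lying in \(\Omega_{\boldsymbol{\sigma}}\) is a fixed point of the regional affine map. This follows because \(\boldsymbol{\phi}_P(\mathbf{z})=\mathbf{D}_{\boldsymbol{\sigma}(\mathbf{z})}\mathbf{z}\) holds pointwise, including on boundaries, where \(\operatorname{ReLU}(0)=0\) is consistent with \(\sigma_i=0\). So the identity is exact and no interior assumption is actually needed for this step. The interior and isolation assumptions only guarantee that the correspondence with target fixed points is well defined and robust. I do not expect any genuine obstacle: the main point is to state clearly that distinctness of fixed points plus at most one fixed point per region yields injectivity.
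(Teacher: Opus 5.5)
Your proof is correct and takes essentially the same route as the paper: its auxiliary lemma shows that nonsingularity of $\mathbf I-\mathbf F_{\boldsymbol\sigma}$ allows at most one fixed point per region, so distinct fixed points must occupy distinct FP symbols, and $\Sigma_{\rm FP}\subseteq\Sigma$ completes the chain. Your explicit injection $\iota$ simply formalizes this counting, and your side remark that the interior assumption is not needed for this step is also right, since $\boldsymbol{\phi}_P(\mathbf z)=\mathbf D_{\boldsymbol\sigma(\mathbf z)}\mathbf z$ holds pointwise under the strict-inequality convention used to define $\boldsymbol\sigma$.
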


% The proof is given in Appendix~\ref{app:fp_proofs}.
The proof and further discussion of the consequences and scope of this bound are provided in Appendix\ref{app:theory}.

Therefore, if a representation reproducing the same \(Q\) target fixed
points attains
\begin{equation}
|\Sigma|=Q,
\label{eq:minimality_certificate}
\end{equation}
then it is minimal in the number of visited linear regions among all
piecewise-linear representations preserving this fixed-point
structure.
The lower bound need not be tight, however: additional FP-free regions
may still be required to organize the global flow.
Our reduction procedure therefore preserves the target fixed-point
structure while testing constructively which remaining symbolic
regions can be removed.

The fixed-point count also yields a necessary lower bound on the number
of nonlinear units $P$.
Because \(P\) ReLUs can generate at most \(2^P\) activation patterns,
\begin{equation}
P
\geq
\left\lceil
\log_2 Q
\right\rceil .
\label{eq:p_lower_bound}
\end{equation}
This condition is necessary but not sufficient: having enough
activation patterns does not guarantee that training will realize the
required fixed points or reproduce the target dynamics.

% ============================================================
% 3. METHOD
% Target: ~1.3--1.5 pages
% ============================================================

\section{Methods: Fixed-Point-Guided Hierarchical Reduction and Retraining}
\label{sec:method}

% \begin{figure}[t]
%     \centering
%     \includegraphics[width=\linewidth]
%     {fig_fp_guided_reduction_concept.png}
%     \caption{
%     Conceptual overview of fixed-point (FP) symbols and fixed-point-guided
%     hierarchical reduction.
%     (a) Each visited ReLU activation pattern defines a linear region.
%     Regions containing valid fixed points are FP symbols, whereas visited
%     regions without fixed points are FP-free symbols.
%     (b) The visited regions and their observed transitions form a symbolic
%     transition graph.
%     (c) Linearizing a ReLU removes its switching boundary and merges symbols
%     that become equivalent under the resulting projection.
%     (d) Repeating this operation yields a reduction tree.
%     Branches that would merge distinct FP symbols are rejected, while valid
%     branches produce progressively simpler symbolic representations.
%     }
%     \label{fig:fp_reduction_concept}
% \end{figure}

\begin{figure}[t]
    \centering
    \includegraphics[width=\linewidth]
    {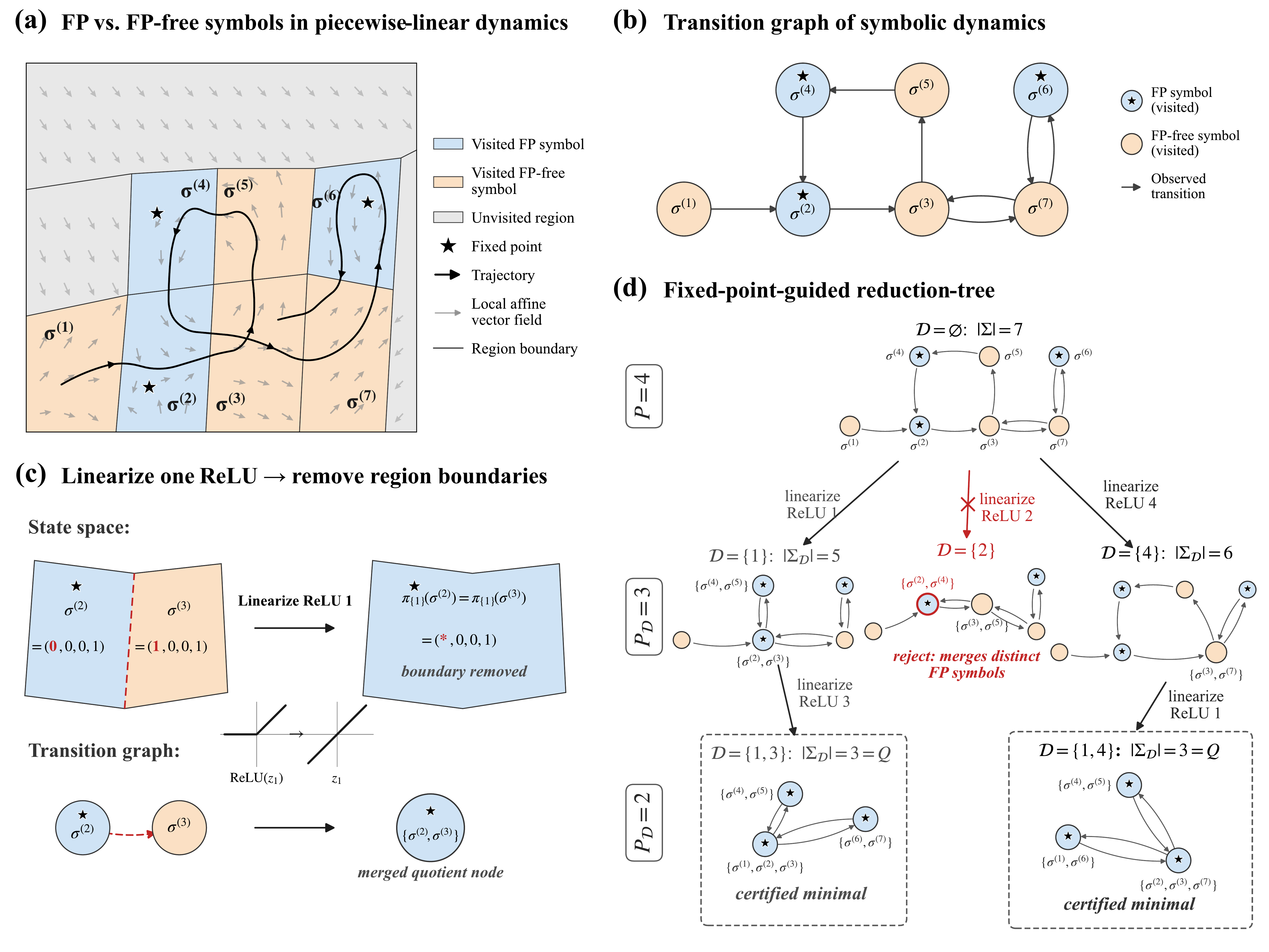}
    \caption{
    Proposed fixed-point-guided hierarchical reduction.
    (a) FP and FP-free symbols in the piecewise-linear state-space partition.
    (b) Symbolic transition graph constructed from visited ReLU activation patterns.
    (c) Region merging induced by ReLU linearization.
    (d) Reduction-tree construction under the fixed-point-preservation constraint.
    }
    \label{fig:fp_reduction_concept}
\end{figure}

% Role:
% Give the entire method in one paragraph before technical details.

Our framework follows a simple strategy:
\emph{learn large, reduce structurally, and retrain compactly}.
We first train a higher-capacity AL-RNN that reliably reconstructs
the target dynamics.
From its symbolic transition graph, we then construct a hierarchy of
progressively reduced representations by linearizing ReLU units while
preventing distinct FP symbols from being merged.
Finally, each selected reduced graph is mapped back to an AL-RNN and
retrained from the parent parameters.
Thus, graph reduction identifies structurally admissible compact
representations, while retraining determines whether they can be
realized as actual dynamical systems.

\subsection{Fixed-point-guided graph reduction}
\label{sec:graph_reduction}

% Role:
% Define the one core graph operation and the reduction tree.
The reduction does not assume that fixed points are provided with the
observed time series.
Once a parent AL-RNN has been trained, candidate fixed points can be
computed analytically from its piecewise-affine dynamics using \eqref{eq:regional_fixed_point}.
We therefore construct each reduction hierarchy directly from the
fixed-point structure recovered by the parent model, rather than from
ground-truth fixed-point annotations or a prescribed fixed-point count.

Let \(G=(\Sigma,\mathcal{E})\) be the symbolic transition graph of a trained
parent AL-RNN, as defined in Sec.~\ref{sec:alrnn}, and let
\(P_{\mathrm{parent}}\) denote the number of nonlinear ReLU units in
the parent model.
Each node is a visited activation pattern
$\boldsymbol{\sigma}
    \in
    \{0,1\}^{P_{\mathrm{parent}}},
$
and is classified as an FP or FP-free symbol according to
$
    \boldsymbol{\sigma}\in\Sigma_{\mathrm{FP}} $ or 
$
    \boldsymbol{\sigma}\in\Sigma_{\mathrm{free}},
$
respectively (\figref{fig:fp_reduction_concept}(a,b)).

% For interpreting graph edges in terms of individual ReLU switches, we assume that the trajectory is sufficiently resolved such that each
% transition between adjacent visited regions crosses a single switching
% boundary, and therefore changes only one ReLU activation bit.
% Simultaneous crossings of multiple switching boundaries are excluded as
% nongeneric cases.

For interpreting graph edges in terms of individual ReLU switches, we assume
that the trajectory is sufficiently resolved such that transitions typically
cross a single switching boundary.
Empirically, $97.1\%$ of observed symbol transitions
($7{,}024/7{,}233$) across the 30 $P_{\mathrm{parent}}=10$ Chua parents
changed exactly one ReLU activation bit, consistent with this assumption.

We reduce the representation by replacing selected ReLU activations by
the identity,
\begin{equation}
    \operatorname{ReLU}(z_j)\rightarrow z_j.
    \label{eq:linearization_method}
\end{equation}
This operation leaves the latent dimension unchanged but removes the
switching boundary associated with ReLU unit \(j\), thereby allowing
the corresponding symbolic regions to merge
(\figref{fig:fp_reduction_concept}(c)).

Let \(\mathcal{D}\) denote the index set of ReLU units that have been linearized.
Removing the activation bits associated with \(\mathcal{D}\) defines a projection

$$
    \pi_{\mathcal{D}}:
    \{0,1\}^{P_{\mathrm{parent}}}
    \to
    \{0,1\}^{P_{\mathrm{parent}}-|\mathcal{D}|}.
$$

% The projected symbols
% \(\Sigma_{\mathcal{D}}:=\pi_{\mathcal{D}}(\Sigma)\)
% form the node set of the quotient graph
% \(G_{\mathcal{D}}=(\Sigma_{\mathcal{D}},E_{\mathcal{D}})\).
The projected symbols
\(\Sigma_{\mathcal D}:=\pi_{\mathcal D}(\Sigma)\)
form the node set of the quotient graph
\(G_{\mathcal D}=(\Sigma_{\mathcal D},\mathcal E_{\mathcal D})\),
whose transitions are obtained by projecting both endpoints of each
parent transition in \(\mathcal E\) through \(\pi_{\mathcal D}\).

For two original symbols
\(\boldsymbol{\sigma},\boldsymbol{\sigma}'\in\Sigma\),
if

$$
    \pi_{\mathcal{D}}(\boldsymbol{\sigma})
    =
    \pi_{\mathcal{D}}(\boldsymbol{\sigma}')
    =
    \tilde{\boldsymbol{\sigma}},
$$

they are merged into the same node
\(\tilde{\boldsymbol{\sigma}}\in\Sigma_{\mathcal{D}}\)
of \(G_{\mathcal{D}}\).

The fixed-point theory in Sec.~\ref{sec:fixed_point_theory} provides the
reduction constraint:
for all distinct FP symbols
\(\boldsymbol{\sigma},\boldsymbol{\sigma}' \in \Sigma_{\mathrm{FP}}\),
we require
\begin{equation}
    \pi_{\mathcal{D}}(\boldsymbol{\sigma})
    \neq
    \pi_{\mathcal{D}}(\boldsymbol{\sigma}').
    \label{eq:fp_constraint_short}
\end{equation}
Starting from \(\mathcal{D}=\varnothing\), we enumerate admissible single-ReLU
linearizations and terminate branches violating
\eqref{eq:fp_constraint_short}.
Duplicate reductions are removed.
The resulting \emph{reduction tree} connects the original
high-resolution graph to progressively coarser symbolic
representations while recording the exact ReLU units linearized along
each path (\figref{fig:fp_reduction_concept}(d)).

% We characterize each reduction by the effective number of nonlinear
% units,
% \(P_{\mathcal{D}}=P_{\mathrm{parent}}-|\mathcal{D}|\),
% and by its symbolic complexity, \(|\Sigma_{\mathcal{D}}|\).
% These quantities need not coincide.
% For retraining, we select reduced graphs attaining the smallest observed
% \(|\Sigma_{\mathcal{D}}|\).
% When \(|\Sigma_{\mathcal{D}}|=Q\), symbol-level minimality of the reduced graph is
% certified by the fixed-point lower bound in
% \secref{sec:fixed_point_theory}.

We characterize each reduction by the remaining nonlinear capacity
\(P_{\mathcal D}\) and symbolic complexity
\(|\Sigma_{\mathcal D}|\).
Candidate-selection criteria are fixed before retraining and depend
only on structural information.
For the 3-scroll Chua experiments, we focus on candidates attaining
both theoretical lower bounds in \secref{sec:fixed_point_theory},
\(|\Sigma_{\mathcal D}|=5\) and \(P_{\mathcal D}=3\).
% Additional benchmark-specific criteria are described in Appendix~\ref{app:candidate_selection}.
% Further details of the reduction procedure, candidate selection, and consensus analysis are provided in Appendix~\ref{app:reduction_algorithm}.
Further details on reduction-tree enumeration, computational cost, and
candidate selection are provided in Appendix~\ref{app:reduction_algorithm}.

\subsection{Reduction-guided retraining}
\label{sec:retraining}

A quotient graph specifies which nonlinear switching boundaries are
removed, but does not guarantee that the corresponding reduced AL-RNN
will realize the target dynamics.
For each selected reduction with linearization set \(\mathcal D\), we
initialize the corresponding reduced AL-RNN from the trained parent
model and permanently replace the ReLU units in \(\mathcal D\) by
linear activations before retraining.

During retraining, the autonomous parent trajectory serves as the
teacher.
Rather than supervising only the observed readout coordinates, we use
the full latent state of the parent as the teacher signal, with equal
weight per coordinate.
This transfers the dynamical realization learned by the
higher-capacity parent while the linearization mask fixes the reduced
nonlinear structure.

Analogous to the sparse teacher-forcing scheme used in standard
AL-RNN training \citep{brenner2024almostlinear}, we also periodically
synchronize the reduced-model state during training.
Here, the reference state is taken from the autonomous parent
trajectory rather than from the observed data.

% Complete loss definitions, synchronization details, and comparisons
% with alternative guidance and weighting schemes are provided in
% Appendix~\ref{app:retraining_detail}.
Implementation details on reduced-model initialization, teacher-trajectory construction, phase alignment, synchronization, and retraining objectives are provided in Appendices~\ref{app:student_initialization}–\ref{app:retraining_detail}.

\subsection{Evaluation}
\label{sec:evaluation_short}

Each model is evaluated using an autonomous rollout of \(10{,}000\)
steps, with the first \(1{,}000\) steps discarded as transient.
We define the post-transient visited-symbol set as
\[
    \widehat{\Sigma}
    =
    \left\{
        \boldsymbol{\sigma}(\mathbf{z}_t)
        :
        1001 \leq t \leq 10000
    \right\},
\]
so that \(|\widehat{\Sigma}|\) measures the symbolic complexity
actually realized by the long-run autonomous dynamics.

% Let \(Q_{\mathrm{vis}}\) denote the number of target fixed points whose
% corresponding FP symbols are visited after the transient.
Let \(Q_{\mathrm{vis}}\) denote the number of analytically admissible
fixed points of the learned model whose corresponding symbolic regions
are visited after the transient.
We define \emph{minimal-realization success} as
\begin{equation}
    \mathcal{S}_{\mathrm{min}}
    =
    \mathbb{I}
    \left[
        Q_{\mathrm{vis}}=Q
        \;\land\;
        |\widehat{\Sigma}|=Q
    \right].
    \label{eq:minimal_success}
\end{equation}
%
% Thus, success requires both recovery of all \(Q\) target fixed points
% and a post-transient symbolic realization that attains the fixed-point
% lower bound.
Thus, success requires the learned dynamics to realize \(Q\) visited
admissible fixed points, together with a post-transient symbolic
realization that attains the fixed-point lower bound.
% For the 3-scroll Chua system considered in the main experiments,
% \(Q=5\).
For the 3-scroll Chua system considered here, \(Q=5\).

We additionally assess continuous-state fidelity using the state-space
distance \(E_{\mathrm{stsp}}\), which measures discrepancy between the
generated and target attractor distributions.
As a stringent high-fidelity criterion, we define
\begin{equation}
    \mathcal{S}_{\mathrm{fid}}
    =
    \mathbb{I}
    \left[
        \mathcal{S}_{\mathrm{min}}=1
        \;\land\;
        E_{\mathrm{stsp}}<2.0
    \right].
    \label{eq:fidelity_success}
\end{equation}
The threshold \(E_{\mathrm{stsp}}<2.0\) is used as a conservative
representative criterion; sensitivity to this choice is examined in
Appendix~\ref{app:evaluation}.
As a complementary diagnostic of temporal fidelity, we also evaluate
the spectral Hellinger distance \(E_{\mathrm H}\), but do not include it in the
definition of \(\mathcal{S}_{\mathrm{fid}}\).

We use \(30\) independent random seeds for each training configuration.
Because different parent models can yield different numbers of reduced
graphs, the source parent seed is treated as the primary statistical
unit for the reduction pipeline.
Full metric definitions, threshold-sensitivity analyses, and
candidate-level statistics are provided in
Appendix~\ref{app:evaluation}.

% ============================================================
% 4. RESULTS
% Target: ~3 pages
% ============================================================

\section{Experiments and Results}
\label{sec:results}

\begin{figure*}[t]
    \centering
    \includegraphics[width=\textwidth]{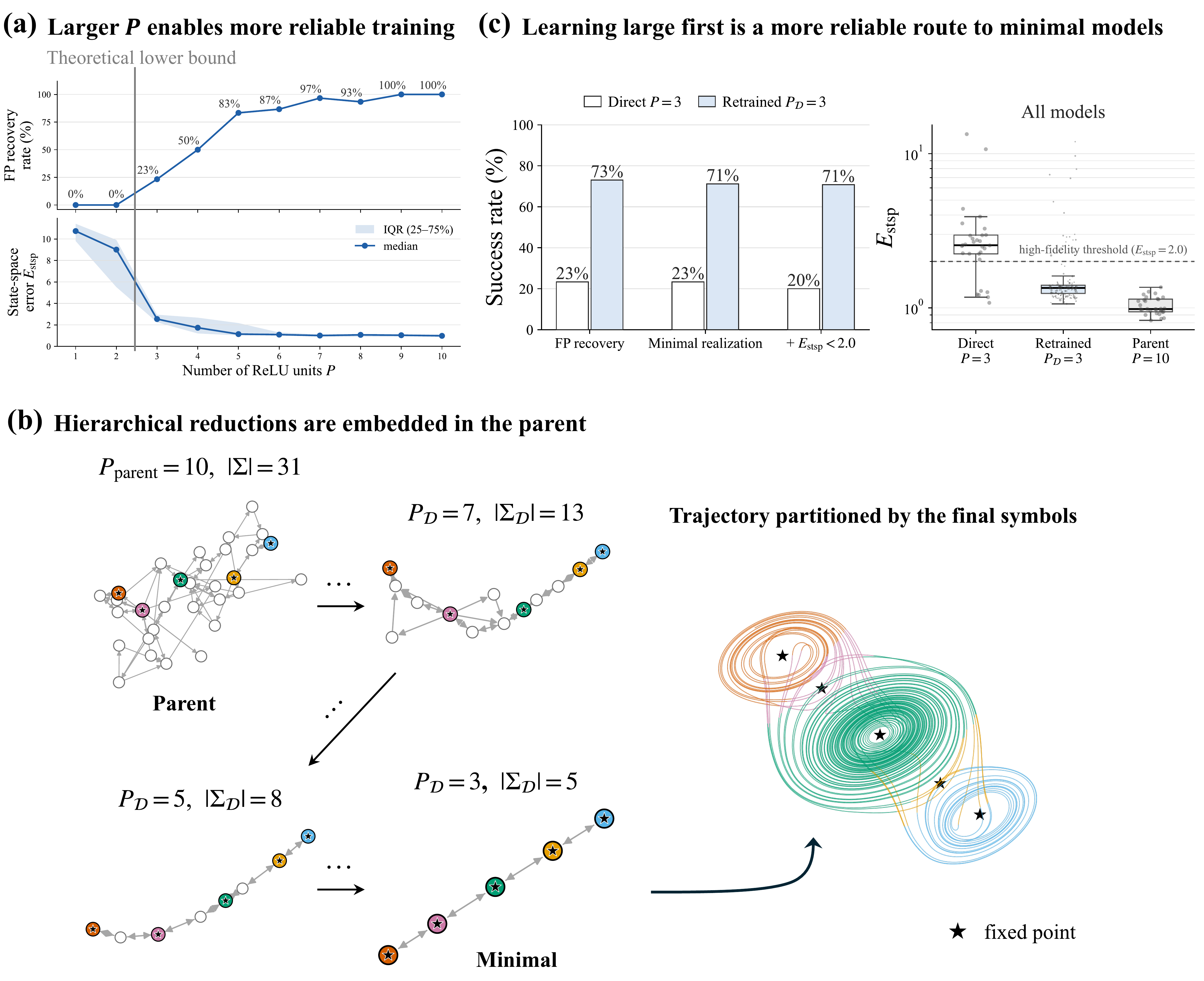}
    \caption{
    Main results for the 3-scroll Chua system.
(a) Direct training becomes more reliable as the number of ReLU units
\(P\) increases.
(b) Hierarchical reduction reaches
\(|\Sigma_{\mathcal D}|=5\) and \(P_{\mathcal D}=3\), attaining both
the symbol-level and nonlinear-capacity lower bounds.
(c) Learning large first provides a more reliable route to minimal
models than direct training at the minimal capacity:
seed-macro success rates are substantially higher after reduction and
retraining (left), while the resulting \(P_{\mathcal D}=3\) models
also exhibit substantially lower \(E_{\mathrm{stsp}}\) than directly
trained \(P=3\) models (right).
The \(P=10\) parents serve as a reference for the geometric
fidelity attainable before reduction.
}
    \label{fig:chua_results_overview}
\end{figure*}

% Role:
% State the four claims up front.

We evaluate the proposed framework on the 3-scroll Chua system.
% , asking
% whether trained higher-capacity AL-RNNs
% (i) contain a hierarchy of reduced representations,
% (ii) admit reductions that attain the theoretical lower bounds,
% and (iii) provide a more reliable route to minimal models than
% direct training.
Throughout the experiments, we use a latent dimension of \(M=20\) and an observed/readout dimension of \(N=3\); data-generation and training configurations are detailed in Appendices~\ref{app:data_generation}-\ref{app:parent_training}.

\subsection{Minimal nonlinear capacity is difficult to optimize directly}
\label{sec:direct_result_short}

The 3-scroll Chua system has five target fixed points.
The lower bound in \eqref{eq:p_lower_bound} therefore gives
$
    P
    \geq
    \left\lceil \log_2 5 \right\rceil
    =
    3,
$
making \(P=3\) the smallest nonlinear capacity permitted by the
fixed-point count.

Direct training becomes increasingly reliable as \(P\) increases
(\figref{fig:chua_results_overview}(a)).
At the capacity boundary \(P=3\), only \(7/30\) independently trained
models achieve minimal realization,
\(\mathcal{S}_{\mathrm{min}}=1\).
Of these, \(6/30=20\%\) also satisfy the high-fidelity criterion,
\(\mathcal{S}_{\mathrm{fid}}=1\).
Thus, although the target dynamics can be realized with only three
nonlinear units, optimization at this minimal capacity is unreliable.

In contrast, higher-capacity \(P=10\) models recover the
five-fixed-point structure in all \(30/30\) runs.
These results reveal a clear gap between the existence of a
minimal-capacity representation and the reliability with which it can
be learned directly.

\subsection{A hierarchy of reduced representations is embedded in the parent}
\label{sec:hierarchy_result_short}

We use the successfully trained \(P=10\) models as parents for the
hierarchical reduction procedure.
Reduction yields not only a final compact graph but a hierarchy of
intermediate quotient graphs.
Different linearization paths progressively merge neighboring symbolic regions
while preserving all distinct FP symbols, thereby providing explicit
structural paths from the higher-capacity parent to progressively
coarser representations
(\figref{fig:chua_results_overview}(b)).

This hierarchy frequently reaches the theoretical lower bounds.
Among the \(30\) parent models with \(P_{\mathrm{parent}}=10\),
\(26/30\) contain at least one reduced graph with
\(|\Sigma_{\mathcal{D}}|=5\), thereby attaining the fixed-point lower
bound and certifying symbol-level minimality of the quotient representation.
Moreover, \(25/30\) contain at least one reduced graph with both
\(|\Sigma_{\mathcal{D}}|=5\) and \(P_{\mathcal{D}}=3\), thereby
attaining both the symbolic-complexity and nonlinear-capacity lower
bounds.
The remaining parents cannot always be reduced to both bounds,
indicating that the minimal structure accessible through reduction
depends on the representation learned by the parent.

\subsection{Learning large first provides a more reliable route to minimal models}
\label{sec:pipeline_result_short}

We next test whether the structurally minimal quotient candidates
identified by hierarchical reduction can be realized as actual
dynamical models, and whether this learn--reduce--retrain strategy
provides a more reliable route to minimal dynamics than direct
minimal-capacity training.
Each selected candidate is retrained using the reduction-guided
protocol described in Section~\ref{sec:retraining}.

To account for the different numbers of reduced candidates generated by
each parent, we first average each success indicator across the selected
candidates within a parent seed and then average across the \(30\)
parent seeds.
Selected candidates that fail during training are counted as failures,
and parents producing no reduced graph with
\(|\Sigma_{\mathcal D}|=5\) and \(P_{\mathcal D}=3\)
are assigned zero success.
Unless stated otherwise, the main comparison is restricted to the
capacity-matched subset with \(P_{\mathcal D}=3\)
(126 of the 141 selected five-symbol candidates), so that the comparison
against direct \(P=3\) training is made at equal nonlinear capacity.

As shown in \figref{fig:chua_results_overview}(c, left), the seed-macro evaluation yields FP-recovery,
minimal-realization, and high-fidelity minimal-realization success rates
of approximately \(73\%\), \(71\%\), and \(71\%\) for
reduction-guided retraining, compared with \(23\%\), \(23\%\), and
\(20\%\) for direct \(P=3\) training, respectively.
Thus, the learn--reduce--retrain pipeline increases the
high-fidelity minimal-realization success rate by approximately
\(3.5\times\) at the same final nonlinear capacity.

The advantage is also evident in the geometric fidelity of the
reconstructed attractor
(\figref{fig:chua_results_overview}(c, right)).
Across the evaluated models, the retrained
\(P_{\mathcal D}=3\) models exhibit substantially lower
\(E_{\mathrm{stsp}}\) than directly trained \(P=3\) models, approaching
the geometric fidelity of the higher-capacity \(P=10\) parents.
% Together, these results show that the learn--reduce--retrain pipeline
% more reliably reaches minimal realizations and yields a substantially
% more favorable overall distribution of state-space fidelity.

% Unlike the capacity-matched main comparison
% (\(P_{\mathcal D}=3\); 126 candidates), this ablation uses all 141
% selected five-symbol candidates, including 15 candidates with
% \(P_{\mathcal D}=4\) contributed by the two parents whose reduction
% trees reach \(|\Sigma_{\mathcal D}|=5\) only at
% \(P_{\mathcal D}=4\).
% This broader candidate set yields slightly higher aggregate success
% rates (\(77\%/74\%/73\%\)) than the capacity-matched main comparison
% (\(73\%/71\%/71\%\)).

Notably, high-fidelity minimal realizations need not reproduce the
quotient transition graph exactly.
% Representative examples and a detailed analysis of transition-topology
% reorganization are provided in
% Appendix~\ref{app:chua_chain_structure}.
Additional structural results, retraining ablations, fidelity-threshold analyses, and transition-topology reorganization are provided in Appendix~\ref{app:chua_results}.

% \subsection{Lorenz-63 and R\"ossler}
% \label{sec:other_results_short}

% % Role:
% % Keep only the conceptual purpose of secondary benchmarks.

% Lorenz-63 and R\"ossler provide complementary tests of the framework.
% For Lorenz-63, the reduction hierarchy recovers compact structures
% consistent with previously reported AL-RNN representations.
% % TODO: insert final quantitative Lorenz result.

% R\"ossler illustrates an important limitation of the fixed-point
% bound.
% Although the fixed-point count provides a lower bound on symbolic
% complexity, the smallest dynamically successful representation
% requires additional FP-free structure.
% % TODO: insert final Rössler numbers.
% This confirms that FP symbols form an irreducible skeleton, but that
% FP-free symbols are not necessarily redundant.

% ============================================================
% 6. DISCUSSION + CONCLUSION
% Target: ~0.5 page
% ============================================================

\section{Discussion and Conclusion}
\label{sec:discussion}

Our results show that higher-capacity AL-RNNs can serve as scaffolds
for minimal dynamical modeling.
Rather than optimizing a minimal-capacity model directly, we first
learn a higher-capacity AL-RNN, reduce its symbolic structure, and
retrain the resulting compact model under guidance from the parent
dynamics.
This learn--reduce--retrain strategy separates structural
simplification from optimization at the capacity boundary.

Beyond identifying minimal representations, the reduction tree records
how successive ReLU linearizations remove switching boundaries and
merge symbolic regions, while rejecting reductions that would merge
distinct FP symbols.
This provides an explicit hierarchy from the parent representation to
progressively coarser dynamical descriptions.
Interestingly, successful retraining need not reproduce the quotient
transition graph exactly.
Even when a quotient candidate is non-path-like, retraining can
reorganize the realized transitions into the five-node path-like
symbolic organization observed in high-fidelity Chua realizations.
This suggests that reduction provides a structural scaffold rather than
fixing the final transition topology.

The fixed-point criterion provides a lower bound rather than a complete
characterization of minimal dynamics.
FP-free regions may still be required to organize the global flow.
% , as illustrated by the R\"ossler results in the Appendix.
Because reduction preserves the fixed-point structure recovered by the
parent, its effectiveness depends on the parent having correctly
captured the relevant fixed points of the target system.
Moreover, not every successful parent can be reduced to both the
symbolic-complexity and nonlinear-capacity lower bounds, indicating
that reducibility depends on the representation learned by the parent.

% Computationally, exhaustive reduction scales combinatorially with
% $P_{\mathrm{parent}}$, but the individual graph operations are inexpensive:
% full tree construction took a median of $0.2$~s at $P_{\mathrm{parent}}=10$
% and $10.2$~s at $P_{\mathrm{parent}}=14$
% (Appendix~\ref{app:candidate_selection}).
% Exhaustive enumeration is not required when only a compact realization is
% sought, since the search can terminate after a small number of admissible
% candidates have been identified.
% Because this combinatorial dependence is on the nonlinear switching variables
% $P$ rather than directly on the ambient dimension, the framework is naturally
% suited to high-dimensional observations governed by comparatively
% low-complexity nonlinear dynamics.
Computationally, exhaustive reduction scales combinatorially with
$P_{\mathrm{parent}}$, but remains inexpensive in the regime considered here:
full tree construction took a median of $0.2$~s at $P_{\mathrm{parent}}=10$
and $10.2$~s at $P_{\mathrm{parent}}=14$
(Appendix~\ref{app:candidate_selection}).
Moreover, exhaustive enumeration is not required to obtain a compact
realization, and the combinatorial dependence is on the nonlinear switching
variables $P$ rather than directly on the ambient dimension.

% Our experiments focus on benchmark systems for which the underlying
% dynamics and relevant fixed-point structure are known, and for which
% compact nonlinear capacities can be independently justified.
% For an unknown dynamical system, however, neither the appropriate
% nonlinear capacity nor the required symbolic granularity is generally
% available in advance.
% Real-world time series may additionally exhibit noise, partial
% observability, and more complex dynamical structure.
% Developing principled criteria for selecting an appropriate level of
% the reduction hierarchy, and determining whether parent-inferred fixed
% points are dynamically meaningful without access to ground truth,
% remain important directions for future work.

Our empirical study focuses on the 3-scroll Chua system, whose five
fixed points impose comparatively strong lower bounds on both symbolic
complexity and nonlinear capacity, while direct training at the
capacity boundary remains unreliable.
Extending the framework to systems with different attractor
organizations, as well as to noisy and partially observed real-world
data, remains an important direction for future work.

% In summary, fixed-point-guided reduction separates two problems that
% direct minimal-model training must solve simultaneously: identifying a
% compact symbolic structure and optimizing a model that realizes it.
% Our results support a simple strategy:
% \emph{learn large, reduce to a compact fixed-point-preserving symbolic
% structure, and retrain the reduced model}.

\subsection*{AI use statement}

We used generative AI tools, including OpenAI ChatGPT and Anthropic
Claude, to assist with manuscript drafting and editing, literature
retrieval and discovery, discussion and refinement of theoretical and methodological formulations, experimental design, and software implementation and debugging.
We also used these tools to assist with the formulation and refinement
of mathematical claims and derivations, and with the preparation and
refinement of scientific figures.

The authors critically reviewed all AI-assisted content.
Mathematical claims and derivations were independently checked by the
authors, and AI-assisted code was executed, tested, and validated
against the intended methodology.
Experimental results reported in the paper were obtained from the
authors' implementations and experiments rather than generated by AI,
and cited literature was checked against the original sources.
The authors take full responsibility for the final content of this work,
including all text, claims, figures, code, and artifacts produced with
the aid of generative AI.

\subsection*{Reproducibility statement}

We provide detailed definitions of the AL-RNN architecture, the
fixed-point-guided reduction procedure, reduction-guided retraining, and
the evaluation criteria in the main text and appendix.
The appendix documents the experimental settings, hyperparameters,
random-seed protocol, fixed-point computation and evaluation procedures,
and other implementation details needed to reproduce the reported
experiments, together with additional analyses and derivations
supporting the theoretical claims.
Source code and experiment scripts for reproducing all
reported results, covering training, hierarchical reduction,
retraining, evaluation, and figure generation, are available at
% \url{https://anonymous.4open.science/r/alrnn-fp-guided-reduction-DB32/}.
\url{https://github.com/hiroto0324/alrnn-fp-guided-reduction}.
The repository additionally contains the dataset and all trained model
checkpoints, so that every reported figure and statistic can be
regenerated directly from the released models without retraining.

% \subsubsection*{Acknowledgments}
% This work was partially supported by JSPS KAKENHI Grant Numbers JP24K20855 (HT), JP23K28154 (GT), JP25K24744 (GT), JST CREST Grant Number JPMJCR24R2 (GT), Japan, JST Moonshot R\&D Grant Number JPMJMS2021 (GT), a project, JPNP14004, commissioned by the New Energy and Industrial Technology Development Organization (NEDO) (HT, GT), and the Institute of AI and Beyond, The University of Tokyo (HT). 

\subsubsection*{Acknowledgments}
The authors would like to thank Professor Kazuyuki Aihara for valuable discussions, guidance, and encouragement on this research.
This work was partially supported by JSPS KAKENHI Grant Numbers JP24K20855 (HT), JP23K28154 (GT), and JP25K24744 (GT); JST CREST Grant Number JPMJCR24R2 (GT); JST ACT-X Grant Number JPMJAX26LG (HT); JST Moonshot R\&D Grant Number JPMJMS2021 (GT); a project, JPNP14004, commissioned by the New Energy and Industrial Technology Development Organization (NEDO) (HT, GT); and the Institute of AI and Beyond, The University of Tokyo (HT).

\bibliography{iclr2027_conference}
\bibliographystyle{iclr2027_conference}

\clearpage
\appendix

\renewcommand{\thefigure}{A\arabic{figure}}
\renewcommand{\thetable}{A\arabic{table}}
\setcounter{figure}{0}
\setcounter{table}{0}
\setcounter{equation}{0}
\renewcommand{\theequation}{A\arabic{equation}}

% ============================================================
% A. ADDITIONAL THEORY AND PROOFS
% ============================================================

\section{Additional Theory and Proofs}
\label{app:theory}

% \subsection{Assumptions and proof of the fixed-point lower bound}
% \label{app:fp_proofs}

% We assume that the target fixed points considered in the analysis are
% isolated and lie in the interiors of their corresponding linear
% regions.
% We further assume that
% \(\mathbf I-\mathbf F_{\boldsymbol{\sigma}}\)
% is nonsingular for every relevant region.
% These conditions exclude fixed points lying exactly on switching
% boundaries and affine subsystems containing continua of fixed points.

% \begin{lemma}[At most one fixed point per region]
% \label{lemma:one_fp_per_region_appendix}
% Under these assumptions, each linear region
% \(\mathcal R_{\boldsymbol{\sigma}}\)
% contains at most one fixed point.
% \end{lemma}

\subsection{Assumptions and proof of the fixed-point lower bound}
\label{app:fp_proofs}

We assume that the target fixed points considered in the analysis are
isolated and lie in the interiors of their corresponding linear
regions.
We further assume that
\(\mathbf I-\mathbf F_{\boldsymbol{\sigma}}\)
is nonsingular for every relevant region.
These conditions exclude fixed points lying exactly on switching
boundaries and affine subsystems containing continua of fixed points.

\begin{lemma}[At most one fixed point per region]
\label{lemma:one_fp_per_region_appendix}
Under these assumptions, each linear region
\(\Omega_{\boldsymbol{\sigma}}\)
contains at most one fixed point.
\end{lemma}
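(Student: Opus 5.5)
The plan is to reduce the claim to uniqueness of solutions of a nonsingular linear system. Fix a symbol \(\boldsymbol{\sigma}\) and suppose \(\mathbf z^\ast\in\Omega_{\boldsymbol{\sigma}}\) is a fixed point of the AL-RNN map \eqref{eq:alrnn}, i.e.\ \(\mathbf z^\ast=\mathbf A\mathbf z^\ast+\mathbf W\boldsymbol{\phi}_P(\mathbf z^\ast)+\mathbf h\). The first step is to note that on \(\Omega_{\boldsymbol{\sigma}}\) the activation acts as the fixed diagonal matrix \(\mathbf D_{\boldsymbol{\sigma}}\). For each ReLU coordinate \(j\), \(\operatorname{ReLU}(z_j)=\sigma_j z_j\) whenever \(\sigma_j=\mathbb I[z_j>0]\); this holds when \(z_j>0\) (value \(z_j\), bit \(1\)) and when \(z_j\le 0\) (value \(0\), bit \(0\)). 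The linear coordinates carry a one in \(\mathbf D_{\boldsymbol{\sigma}}\). Hence \(\boldsymbol{\phi}_P(\mathbf z)=\mathbf D_{\boldsymbol{\sigma}}\mathbf z\) for every \(\mathbf z\in\Omega_{\boldsymbol{\sigma}}\), which is exactly the affine form \eqref{eq:regional_dynamics}.

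Substituting this, every fixed point in \(\Omega_{\boldsymbol{\sigma}}\) satisfies \((\mathbf I-\mathbf F_{\boldsymbol{\sigma}})\mathbf z^\ast=\mathbf h\). Take two such fixed points \(\mathbf z^\ast,\mathbf z^{\ast\ast}\in\Omega_{\boldsymbol{\sigma}}\) and subtract the two equations. This gives \((\mathbf I-\mathbf F_{\boldsymbol{\sigma}})(\mathbf z^\ast-\mathbf z^{\ast\ast})=\mathbf 0\). By the standing nonsingularity assumption the kernel is trivial, so \(\mathbf z^\ast=\mathbf z^{\ast\ast}\), and that common point is the candidate \eqref{eq:regional_fixed_point}. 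Consequently \(\Omega_{\boldsymbol{\sigma}}\) contains either no fixed point or exactly \(\mathbf z^\ast_{\boldsymbol{\sigma}}\), and the latter case occurs precisely when \(\boldsymbol{\sigma}\in\Sigma_{\mathrm{FP}}\).

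The only step needing care is the first one: the identity \(\boldsymbol{\phi}_P=\mathbf D_{\boldsymbol{\sigma}}\) must hold on the whole region, including points where \(z_j=0\). The convention \(\mathbb I[z_j>0]\) handles these boundary points consistently, since the value and the bit are both zero. The interiority assumption is therefore not needed for uniqueness itself; it matters only so that the fixed point is robustly assigned to a single symbol. I would remark that isolation is implied by the argument, since a singular \(\mathbf I-\mathbf F_{\boldsymbol{\sigma}}\) is exactly the case that permits an affine continuum of fixed points.
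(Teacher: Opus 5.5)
Your proof is correct and follows the paper's argument: on $\Omega_{\boldsymbol{\sigma}}$ any fixed point must solve $(\mathbf I-\mathbf F_{\boldsymbol{\sigma}})\mathbf z^\ast=\mathbf h$, and nonsingularity forces uniqueness. Your check that $\boldsymbol{\phi}_P(\mathbf z)=\mathbf D_{\boldsymbol{\sigma}}\mathbf z$ on the whole region (including $z_j=0$) fills in a step the paper takes for granted, and you are right that interiority is never used; the only nit is that ``precisely when $\boldsymbol{\sigma}\in\Sigma_{\mathrm{FP}}$'' holds only for visited symbols, since $\Sigma_{\mathrm{FP}}\subseteq\Sigma$ by definition.
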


\begin{proof}
Within a fixed region, a fixed point $\mathbf z^*$ satisfies
\[
    (\mathbf I-\mathbf F_{\boldsymbol{\sigma}})
    \mathbf z^*
    =
    \mathbf h.
\]
Because
\(\mathbf I-\mathbf F_{\boldsymbol{\sigma}}\)
is nonsingular, this linear system has a unique solution.
Hence the region contains at most one valid fixed point.
\end{proof}

% \begin{proof}[Proof of Proposition~\ref{prop:fp_lower_bound}]
% Suppose a piecewise-linear representation reproduces \(Q\) distinct
% target fixed points.
% By Lemma~\ref{lemma:one_fp_per_region_appendix}, no two of these fixed
% points can occupy the same linear region.
% They therefore require at least \(Q\) distinct FP symbols:
% \[
%     |\Sigma_{\mathrm{FP}}|
%     \geq Q.
% \]
% Since
% \(\Sigma_{\mathrm{FP}}\subseteq\Sigma\),
% \[
%     |\Sigma|
%     \geq
%     |\Sigma_{\mathrm{FP}}|
%     \geq Q.
% \]
% \end{proof}

\begin{proof}[Proof of Proposition~\ref{prop:fp_lower_bound}]
By assumption, each of the \(Q\) target fixed points lies in a visited
region, and hence its corresponding symbol belongs to
\(\Sigma_{\rm FP}\).
By Lemma~\ref{lemma:one_fp_per_region_appendix}, no two distinct target fixed
points can occupy the same linear region.
They therefore require at least \(Q\) distinct FP symbols:
\[
    |\Sigma_{\rm FP}| \geq Q.
\]
Since
\(\Sigma_{\rm FP}\subseteq\Sigma\),
\[
    |\Sigma|
    \geq
    |\Sigma_{\rm FP}|
    \geq
    Q.
\]
\end{proof}

\subsection{Consequences for symbolic and nonlinear complexity}
\label{app:minimality_consequences}

If a representation reproducing the same \(Q\) target fixed points
satisfies
\[
    |\Sigma|=Q,
\]
then no representation preserving these fixed points can contain fewer
visited linear regions.
It is therefore minimal in symbolic complexity under the stated
fixed-point-preservation requirement.

This certificate concerns the number of visited regions rather than
the number of nonlinear units.
Because \(P\) ReLU units admit at most \(2^P\) activation patterns,
preserving \(Q\) distinct FP symbols also requires
\[
    P
    \geq
    \left\lceil\log_2 Q\right\rceil.
\]
% This is a necessary but not sufficient condition: sufficient
% activation-pattern capacity does not guarantee that the required
% fixed points or global dynamics can be realized or learned.
This is a necessary but not sufficient condition:
meeting this bound does not guarantee that the required fixed points
or global dynamics can be realized or learned.

\subsection{Scope and possible non-tightness of the certificate}
\label{app:certificate_scope}

The fixed-point lower bound specifies a minimum number of symbolic
regions required by the fixed-point structure, but it does not imply
that this number is sufficient to reproduce the full attractor
dynamics.
Additional FP-free regions may still be required to organize the
global flow.

Consequently, the absence of a fixed point does not imply that a
symbol can be removed.
Our reduction procedure therefore preserves the fixed-point skeleton
while exploring which additional FP-free regions can be merged.
The corresponding reduced models are then retrained and evaluated to
determine whether they can realize the target dynamics faithfully.

Importantly, a quotient graph accepted by the fixed-point-preservation
constraint is only a structurally admissible candidate.
Retraining may introduce or eliminate visited symbolic regions, so the
resulting autonomous dynamics must be evaluated separately.
Whether the bound is tight must therefore be determined through
retraining and autonomous dynamical evaluation.
% The R\"ossler benchmark in Appendix~\ref{app:rossler} illustrates a case in which the fixed-point lower bound is not tight.

\clearpage
% ============================================================
% B. HIERARCHICAL REDUCTION DETAILS
% ============================================================

\section{Hierarchical Reduction Details}
\label{app:reduction_algorithm}

\subsection{ReLU linearization and symbol projection}
\label{app:symbol_projection}

Let
\(\boldsymbol{\sigma}\in
\{0,1\}^{P_{\mathrm{parent}}}\)
denote an original activation pattern and let \(\mathcal{D}\) be the index set of linearized ReLU units.
For each \(j\in \mathcal{D}\), we replace
\begin{equation}
    \operatorname{ReLU}(z_j)
    \longrightarrow
    z_j.
\end{equation}
This operation does not remove a latent-state dimension.
Instead, it removes the switching boundary induced by the $j$th ReLU while preserving the full latent dimension \(M\).

The set \(\mathcal{D}\) induces the projection
\begin{equation}
    \pi_{\mathcal{D}}:
    \{0,1\}^{P_{\mathrm{parent}}}
    \rightarrow
    \{0,1\}^{P_{\mathrm{parent}}-|\mathcal{D}|},
\end{equation}
which removes all activation-pattern coordinates indexed by \(\mathcal{D}\).
Two original symbols are therefore equivalent after the linearization
if
\begin{equation}
    \boldsymbol{\sigma}
    \sim_{\mathcal{D}}
    \boldsymbol{\sigma}'
    \iff
    \pi_{\mathcal{D}}(\boldsymbol{\sigma})
    =
    \pi_{\mathcal{D}}(\boldsymbol{\sigma}').
\end{equation}
The nodes of the reduced quotient graph are the resulting equivalence
classes.

The reduced transition set is
\begin{equation}
    \mathcal E_{\mathcal D}
    :=
    \left\{
        \left(
            \pi_{\mathcal D}(\boldsymbol{\sigma}),
            \pi_{\mathcal D}(\boldsymbol{\sigma}')
        \right)
        :
        (\boldsymbol{\sigma},\boldsymbol{\sigma}')\in\mathcal E
    \right\}.
\end{equation}
Thus, the reduced transition graph is obtained directly by projecting
both endpoints of each parent transition onto the retained activation
bits.
Self-transitions created by quotienting may be omitted from graph
visualizations.

\subsection{FP-symbol separation constraint}
\label{app:fp_preservation}

A reduction is structurally admissible only if no two distinct FP
symbols are projected onto the same quotient node:
\begin{equation}
    \forall\,
    \boldsymbol{\sigma},\boldsymbol{\sigma}'
    \in\Sigma_{\mathrm{FP}},
    \quad
    \boldsymbol{\sigma}\neq\boldsymbol{\sigma}',
    \qquad
    \pi_{\mathcal{D}}(\boldsymbol{\sigma})
    \neq
    \pi_{\mathcal{D}}(\boldsymbol{\sigma}').
    \label{eq:fp_preservation_appendix}
\end{equation}
This condition allows an FP-free symbol to merge with another FP-free
symbol or with a single FP symbol, but rejects any reduction that would
merge two distinct FP symbols.

The condition protects only the fixed-point skeleton.
It does not imply that every accepted quotient graph can be realized
as a faithful reduced dynamical model.
Dynamical realization is assessed separately through
reduction-guided retraining and autonomous evaluation.

\subsection{Reduction-tree enumeration}
\label{app:reduction_tree}

Starting from
\(\mathcal{D}=\varnothing\),
we systematically consider each ReLU unit that has not yet been
linearized.
For a candidate ReLU unit \(j\), we form
$
    \mathcal{D}'
    =
    \mathcal{D}\cup\{j\}
$
and compute the induced symbol projection
\(\pi_{\mathcal{D}'}\).
If \eqref{eq:fp_preservation_appendix} is violated, the branch is
terminated.
Because further linearization can only remove additional activation
bits, once two distinct FP symbols collide they cannot be separated
again by any descendant reduction.
Otherwise, the corresponding quotient graph is added to the reduction
tree and further linearizations are considered.

Because different linearization orders can lead to the same set
\(\mathcal{D}\), each linearization set is stored only once.
A concise pseudocode description is given below.

\[
\begin{array}{ll}
\textbf{Input:}
& G=(\Sigma,\mathcal E),\ \Sigma_{\mathrm{FP}},
  \ P_{\mathrm{parent}} \\[2mm]

\textbf{Initialize:}
& \mathcal D\leftarrow\varnothing,\quad
  \mathrm{Queue}\leftarrow\{\mathcal D\} \\[2mm]

\textbf{while } \mathrm{Queue}\neq\varnothing: &
\\
& \text{pop } \mathcal D \\[1mm]
& \textbf{for each }
  j\in\{1,\ldots,P_{\mathrm{parent}}\}\setminus\mathcal D: \\[1mm]
& \qquad \mathcal D'\leftarrow\mathcal D\cup\{j\} \\[1mm]
& \qquad \text{project all symbols with }
  \pi_{\mathcal D'} \\[1mm]
& \qquad \textbf{if distinct FP symbols collide:} \\[1mm]
& \qquad\qquad \text{reject }\mathcal D' \\[1mm]
& \qquad \textbf{else:} \\[1mm]
& \qquad\qquad \text{construct }G_{\mathcal D'} \\[1mm]
& \qquad\qquad \textbf{if }\mathcal D'\text{ has not been visited:} \\[1mm]
& \qquad\qquad\qquad
  \text{add }(\mathcal D',G_{\mathcal D'})
  \text{ as a child of }(\mathcal D,G_{\mathcal D}) \\[1mm]
& \qquad\qquad\qquad
  \text{push }\mathcal D' \\[2mm]

\textbf{Output:}
& \text{tree of admissible }(\mathcal D,G_{\mathcal D})\text{ pairs,} \\
& \text{with edges corresponding to single-ReLU linearizations.}
\end{array}
\]

\subsection{Reduction complexity and candidate selection}
\label{app:candidate_selection}

For each linearization set \(\mathcal D\), we characterize the reduced
representation by two complementary quantities:
\[
    P_{\mathcal D}
    =
    P_{\mathrm{parent}}-|\mathcal D|,
    \qquad
    |\Sigma_{\mathcal D}|.
\]
The first is the number of ReLU units remaining after linearization,
whereas the second is the symbolic complexity of the quotient graph.
These quantities need not attain their minima simultaneously.

Candidate selection is fixed before retraining and never uses
post-retraining reconstruction performance.
For the 3-scroll Chua system, the fixed-point theory gives a
five-symbol lower bound and a three-ReLU lower bound.
The principal capacity-boundary analysis therefore uses candidates
satisfying
\[
    |\Sigma_{\mathcal D}|=5,
    \qquad
    P_{\mathcal D}=3.
\]
Five-symbol candidates with larger \(P_{\mathcal D}\) still attain the
certified symbol-count lower bound, but do not attain the
nonlinear-capacity lower bound.

\paragraph{Computational cost of reduction.}
Each reduction is uniquely specified by a subset \(\mathcal D\) of the
\(P_{\mathrm{parent}}\) nonlinear units, so the number of distinct
reduction masks is at most \(2^{P_{\mathrm{parent}}}\).
The exhaustive search therefore has a combinatorial worst-case
dependence on \(P_{\mathrm{parent}}\), although branches are pruned once
the fixed-point-separation constraint is violated.
In practice, each reduction step consists only of symbolic projection,
fixed-point-collision checks, and quotient-graph construction, and
requires no gradient-based optimization.

To characterize the empirical scaling of the structural search, we
measured exhaustive reduction-tree construction for
\(P_{\mathrm{parent}}=10,\ldots,14\), using 30 independently trained
parent models for each value of \(P_{\mathrm{parent}}\).
All timings were measured single-threaded on one thread of an
Intel Core i9-10980XE (3.0\,GHz) desktop CPU.
The resulting tree sizes and wall-clock times are summarized in
Table~\ref{tab:reduction_runtime}.

\begin{table}[t]
    \centering
    \caption{
    Empirical scaling of exhaustive reduction-tree construction.
    Values are medians over 30 parent models, with interquartile ranges
    in brackets.
    }
    \label{tab:reduction_runtime}
    \begin{tabular}{rrrr}
        \toprule
        \(P_{\mathrm{parent}}\)
        & \(2^{P_{\mathrm{parent}}}\)
        & Admissible nodes
        & Tree time [s] \\
        \midrule
        10 & 1{,}024
           & 476 [382--586]
           & 0.2 [0.1--0.3] \\
        11 & 2{,}048
           & 1{,}133 [861--1{,}270]
           & 0.5 [0.4--0.7] \\
        12 & 4{,}096
           & 2{,}324 [1{,}936--2{,}685]
           & 1.4 [1.2--1.7] \\
        13 & 8{,}192
           & 5{,}145 [3{,}554--5{,}498]
           & 3.3 [2.7--4.3] \\
        14 & 16{,}384
           & 10{,}616 [8{,}543--11{,}779]
           & 10.2 [8.1--11.5] \\
        \bottomrule
    \end{tabular}
\end{table}

For the \(P_{\mathrm{parent}}=10\) setting used in the main experiments,
exhaustive tree construction required only a median of \(0.2\)~s per
parent.
As expected from the combinatorial growth of the mask space, the
runtime increased rapidly with \(P_{\mathrm{parent}}\).
Nevertheless, the absolute cost remained modest over the range examined:
even for \(P_{\mathrm{parent}}=14\), corresponding to
\(2^{14}=16{,}384\) possible masks, exhaustive construction required a
median of only \(10.2\)~s per parent.
Thus, for the nonlinear capacities considered here, the structural
reduction stage is computationally inexpensive despite its combinatorial
worst-case scaling.

The exhaustive traversal used in this study is primarily an analysis
choice rather than a requirement of the learn--reduce--retrain
framework.
We enumerate the full admissible hierarchy to characterize variability
across parent representations and to support the consensus analysis in
Appendix~\ref{app:consensus_granularity}.
If the objective is simply to obtain a compact realization, the search
can instead terminate after one or several candidates satisfying a
prescribed structural criterion have been identified.
Such targeted search avoids exploring the full reduction tree, although
it does not provide the exhaustive candidate distribution required by
the consensus estimator.

\subsection{Consensus estimation of symbolic granularity}
\label{app:consensus_granularity}

For an unknown system, the fixed-point structure of a trained parent
model still provides a data-driven lower bound on symbolic complexity.
For parent $r$, let \(Q_{\mathrm{vis}}^{(r)}\) denote the number of
analytically admissible fixed points whose symbolic regions are visited
during autonomous dynamics.
Any reduction preserving these fixed-point symbols must satisfy
\begin{equation}
    |\Sigma_{\mathcal D}|
    \geq
    Q_{\mathrm{vis}}^{(r)} .
\end{equation}

This bound is parent-dependent and need not be tight, since additional
FP-free regions may be required to organize the global flow.
We therefore consider a simple consensus-based strategy for estimating
the relevant symbolic granularity from reduction hierarchies across
independently trained parents.

Our consensus procedure is conditioned on a prescribed reduced
nonlinear capacity \(P_{\mathcal D}=P_{\min}\).
For each independently trained parent \(r\), we collect all distinct
admissible reduction nodes at this capacity.
Rejected nodes, including reductions that violate the fixed-point
preservation constraint, are excluded, and duplicate linearization masks are counted only once within each parent.
Let \(n_r(k)\) denote the number of such admissible nodes from parent
\(r\) with symbolic complexity \(|\Sigma_{\mathcal D}|=k\).

This procedure estimates symbolic granularity conditional on a chosen
reduced nonlinear capacity \(P_{\min}\); selecting \(P_{\min}\) itself
is a separate model-selection problem.
The parent-inferred fixed-point count provides the necessary capacity
lower bound
\begin{equation}
    P_{\mathcal D}
    \geq
    \left\lceil
        \log_2 Q_{\mathrm{vis}}^{(r)}
    \right\rceil ,
\end{equation}
but this bound need not identify the reduced capacity required for
faithful realization of the full dynamics.

Directly pooling all reduction nodes would give disproportionate
weight to parents whose reduction trees happen to contain more
admissible branches.
We therefore first normalize the vote distribution within each parent:
\begin{equation}
    h_r(k)
    =
    \frac{
        n_r(k)
    }{
        \displaystyle\sum_i n_r(i)
    }.
    \label{eq:parent_symbol_vote}
\end{equation}
The parent-balanced consensus support for symbolic granularity \(k\)
is then
\begin{equation}
    H(k)
    =
    \frac{1}{|\mathcal P_{\mathrm{cons}}|}
    \sum_{r\in\mathcal P_{\mathrm{cons}}}
    h_r(k),
    \label{eq:symbol_consensus}
\end{equation}
where \(\mathcal P_{\mathrm{cons}}\) is the set of parent models that
contribute at least one admissible node to the
\(P_{\mathcal D}=P_{\min}\) slice.
The consensus estimate is
\begin{equation}
    \widehat k
    =
    \arg\max_k H(k),
    \label{eq:consensus_K}
\end{equation}
and we define the corresponding consensus strength as
\begin{equation}
    C
    =
    H(\widehat k).
    \label{eq:consensus_strength}
\end{equation}

This construction gives each independently trained parent equal total
weight, irrespective of the number of admissible descendants it
generates.
Hence, \(\widehat k\) identifies the symbolic granularity that is most
consistently supported across independently learned parent
representations, rather than the granularity that merely occurs most
often in the pooled reduction tree.

When the consensus distribution \(H(k)\) is sharply concentrated,
we retain the dominant layer \(|\Sigma_{\mathcal D}|=\widehat k\) for subsequent
candidate selection.
When the distribution is diffuse, we additionally inspect neighboring
well-supported symbolic layers as a robustness analysis rather than
interpreting the mode as a sharp estimate of the true minimum.
% Importantly, the consensus mode \(\widehat k\) itself is defined by
% Eq.~\eqref{eq:consensus_K}; retaining neighboring layers does not
% change this estimate.

% For the 3-scroll Chua reduction hierarchy at
% \(P_{\mathcal D}=3\), the consensus mode is
% %
% \begin{equation}
%     \widehat{k}=5,
%     \qquad
%     H(5)=0.370,
%     \qquad
%     H(6)=0.221,
%     \qquad
%     H(7)=0.320 .
% \end{equation}
% %
% Although the support is relatively diffuse, its mode coincides with
% the independently derived five-symbol lower bound.
% We therefore identify \(|\Sigma_{\mathcal D}|=5\) as the consensus symbolic granularity
% and additionally retain the neighboring \(|\Sigma_{\mathcal D}|=6\) and \(|\Sigma_{\mathcal D}|=7\)
% layers when examining the robustness of candidate selection.

For the 3-scroll Chua reduction hierarchy at
\(P_{\mathcal D}=3\), the consensus mode is
\begin{equation}
    \widehat{k}=5,
    \qquad
    H(5)=0.370,
    \qquad
    H(6)=0.221,
    \qquad
    H(7)=0.320 .
\end{equation}
The support is relatively diffuse, with substantial mass also assigned
to \(k=6\) and \(k=7\).
Together, the \(k=5,6,7\) layers account for \(91.1\%\) of the
parent-balanced consensus support.
For the 3-scroll Chua system, the modal estimate
\(\widehat{k}=5\) can be independently validated by the known
five-symbol lower bound.
For an unknown system, however, such an external reference would not
generally be available.
When the consensus distribution is similarly diffuse, it is therefore
useful to retain other highly supported symbolic granularities for
subsequent retraining and dynamical evaluation.

The consensus estimate \(\widehat{k}\) should be interpreted as an
empirical estimate of compact symbolic granularity rather than a
certificate of global minimality.
The fixed-point lower bound may be non-tight, and the same symbolic
granularity can occur at different nonlinear capacities.
Accordingly, candidate models selected from the consensus layer must
still be retrained and evaluated for autonomous dynamical fidelity.

% ============================================================
% C. TRAINING AND REDUCTION-GUIDED RETRAINING DETAILS
% ============================================================

\clearpage
\section{Training and Reduction-Guided Retraining Details}
\label{app:training_detail}

% \subsection{Benchmark data generation}
\subsection{3-scroll Chua data generation}
\label{app:data_generation}

\paragraph{3-scroll Chua system.}
The 3-scroll Chua trajectory used in our experiments was generated by
the continuous-time system
\begin{align}
    \dot{x} &= 10\left(y - 0.05x - 0.6\sin x\right),\label{eq:chua-x}\\
    \dot{y} &= x-y+z, \label{eq:chua-y}\\
    \dot{z} &= -16y. \label{eq:chua-z}
\end{align}
We numerically integrated these equations using a time step of \(\Delta t=0.01\).
The sinusoidal nonlinearity follows the multi-scroll Chua construction
of \citet{tang2001generation}.

The fixed points satisfy
\[
    y^*=0,\qquad z^*=-x^*,\qquad
    0.05x^*+0.6\sin x^*=0.
\]
Within the region occupied by the target 3-scroll attractor, this
equation yields five fixed points with
\[
    x^*\approx 0,\ \pm 3.4316,\ \pm 5.7806.
\]
These five fixed points constitute the target fixed-point set used in
our analysis, and hence \(Q=5\).

% \subsection{Parent AL-RNN configuration}
% \label{app:parent_training}

% For the 3-scroll Chua experiments, the parent AL-RNN uses
% %
% \begin{equation}
%     M=20,
%     \qquad
%     N=3,
%     \qquad
%     P_{\mathrm{parent}}=10,
% \end{equation}
% %
% where \(M\) is the latent-state dimension, \(N\) is the number of
% observed/readout dimensions, and
% \(P_{\mathrm{parent}}\)
% is the number of ReLU units, which occupy the final
% \(P_{\mathrm{parent}}\)
% coordinates of the latent state.

% We train \(30\) independent parent seeds.
% The parent-training configuration is:
% %
% \begin{itemize}
%     \item 2000 epochs;
%     \item 50 optimization steps per epoch;
%     \item batch size 16;
%     \item segment length 200;
%     \item RAdam optimizer \citep{liu2020radam};
%     \item exponentially decaying learning rate
%           \(10^{-3}\rightarrow10^{-5}\);
%     \item the standard sparse teacher-forcing protocol \citep{brenner2024almostlinear} used in the parent-training baseline.
% \end{itemize}

% For the main Chua reduction experiment, a parent model is eligible for
% structural reduction when its autonomous free-running trajectory visits
% all five target fixed-point symbols:
% %
% \begin{equation}
%     Q_{\mathrm{vis}}=5.
% \end{equation}

\subsection{Parent and direct-training configurations}
\label{app:parent_training}

For the 3-scroll Chua experiments, the parent AL-RNN uses
\begin{equation}
    M=20,
    \qquad
    N=3,
    \qquad
    P_{\mathrm{parent}}=10,
\end{equation}
where \(M\) is the latent-state dimension, \(N\) is the number of
observed/readout dimensions, and \(P_{\mathrm{parent}}\) is the number
of ReLU units, which occupy the final \(P_{\mathrm{parent}}\)
coordinates of the latent state.

We train \(30\) independent parent seeds for \(2000\) epochs, with
\(50\) optimization steps per epoch.
Each optimization step uses a batch of \(16\) trajectory segments of
length \(200\).
Models are optimized using RAdam \citep{liu2020radam} with an
exponentially decaying learning rate from \(10^{-3}\) to \(10^{-5}\).
Gradients are clipped to a maximum norm of \(10\) at every optimization
step.
No noise is added during training.

Parent training uses the standard sparse teacher-forcing protocol of
\citet{brenner2024almostlinear}, with a forcing interval of 128 time steps.
The observed coordinates are overwritten by the corresponding target
values at the beginning of each training segment and every \(128\)
steps thereafter.
The forcing strength is \(1.0\).

All directly trained baselines use the same latent dimension,
optimization schedule, sparse teacher-forcing protocol, and checkpoint
selection rule as the parent models, varying only the number of ReLU
units \(P\).

\paragraph{Checkpoint selection.}
Every \(10\) epochs, we evaluate an autonomous rollout of \(10{,}000\)
steps and retain the checkpoint minimizing
\begin{equation}
    S_{\mathrm{ckpt}}
    =
    E_{\mathrm{stsp}}
    +
    5.0\left|Q_{\mathrm{vis}}-Q\right|,
    \label{eq:parent_checkpoint_selection}
\end{equation}
%
% where \(Q\) is the known number of target fixed points
% (\(Q=5\) for the 3-scroll Chua system) and \(Q_{\mathrm{vis}}\) is the
% number of corresponding fixed-point symbols visited by the autonomous
% trajectory.
% Importantly, no fixed-point term is included in the training loss;
% the target fixed-point count is used only for checkpoint selection.
where \(Q\) is the known number of target fixed points
(\(Q=5\) for the 3-scroll Chua system) and \(Q_{\mathrm{vis}}\) is the
number of analytically admissible fixed points of the learned model
whose corresponding symbolic regions are visited by the autonomous
trajectory.
Importantly, no fixed-point term is included in the training loss;
the known target fixed-point count is used for checkpoint selection.

\subsection{Reduced-model initialization}
\label{app:student_initialization}

Let
\(\theta_{\mathrm{parent}}\)
denote the complete trainable parameter set of the parent checkpoint.
For every reduction candidate induced by $\mathcal{D}$, the reduced student is initialized as
\begin{equation}
    \theta_{\mathcal{D}}^{(0)}
    =
    \theta_{\mathrm{parent}}.
    \label{eq:student_initialization_appendix}
\end{equation}
Thus, all trainable parameters are copied exactly from the
higher-capacity parent.

% The architectural change is imposed only through the candidate's
% linearization mask:
% ReLU activations selected by the reduction procedure are permanently
% replaced by identity activations.
% The mask is fixed throughout retraining and is not trainable.
% The optimizer state in RAdam is reinitialized from scratch.

% Consequently, candidates derived from the same parent begin with the
% same trainable parameter values and differ only in which nonlinear
% units have been linearized.

The architectural change is imposed only through the candidate's
linearization mask: ReLU activations selected by the reduction procedure
are permanently replaced by identity activations.
The mask is fixed throughout retraining and is not trainable.

Consequently, candidates derived from the same parent begin with the
same trainable parameter values and differ only in which nonlinear
units have been linearized.

Unless otherwise stated, reduced candidates are retrained using the
same optimization schedule as the parent models described in
Appendix~\ref{app:parent_training}.
Each selected candidate is retrained once using a fixed retraining seed.
Because the trainable parameters are initialized deterministically from
the parent checkpoint, the only stochasticity within a retraining run
comes from sampling trajectory-segment start times.
The optimizer state is initialized from scratch for each run, and
checkpoints are selected using the same criterion as in \eqref{eq:parent_checkpoint_selection}.

\subsection{Autonomous teacher trajectory}
\label{app:teacher_trajectory}

For all teacher-guided retraining conditions, teacher targets are
extracted from a single continuous autonomous free-run trajectory of
the frozen parent model.
No teacher forcing, raw-data overwrite, or segment reset is applied
during this trajectory.

% The initial state follows the standard autonomous evaluation protocol,
% %
% \begin{equation}
%     \mathbf{z}_0
%     =
%     \mathbf{x}_0\mathbf{B},
%     \qquad
%     \mathbf{z}_{0,:N}
%     =
%     \mathbf{x}_0,
% \end{equation}
% %
% where \(\mathbf{x}_0\) is the first sample of the training trajectory.
The teacher rollout is initialized from the first sample of the
training trajectory using the same latent-state initialization
procedure as in standard autonomous evaluation.
The resulting teacher trajectory contains \(400{,}000\) time steps.

For every time step, we store the parent latent state, readout, and
activation pattern of the original nonlinear units.
A single cache is constructed for each source parent and shared across
all reduction candidates, retraining conditions, and retraining seeds
derived from that parent.

The same cache also supplies the initial student state for each
training segment and the reference states used for sparse
synchronization.
Specifically, for a segment beginning at teacher time \(s\), the
student is initialized as
\begin{equation}
    \mathbf{z}^{\mathcal D}_s
    =
    \mathbf{z}^{\mathrm{parent}}_s,
\end{equation}
and subsequent synchronization events use the corresponding parent
states from the same cached trajectory.
Thus, all trajectory-based teacher signals used during retraining are
obtained from this single autonomous parent rollout.

\subsection{Phase-aligned student rollouts}
\label{app:phase_alignment}

During teacher-guided retraining, a random segment start \(s\) is
sampled from the autonomous teacher cache and the student is initialized
from the corresponding teacher state:
\begin{equation}
    \mathbf{z}_s^{\mathrm{student}}
    =
    \mathbf{z}_s^{\mathrm{teacher}}.
    \label{eq:phase_alignment_appendix}
\end{equation}
The student is then rolled out autonomously for
$
    L=200
$
steps.
The teacher targets are taken from the same free-running teacher
trajectory at
\begin{equation}
    s+1,\ldots,s+L.
\end{equation}

The phase-alignment procedure is important for chaotic dynamics.
Because nearby chaotic trajectories quickly diverge in phase, teacher
targets must be taken from the same parent trajectory used to initialize the student segment.
Otherwise, the student would be trained to match parent states
corresponding to a different phase of the attractor.

\subsection{Periodic parent-state synchronization}
\label{app:parent_sync}

In addition to initializing each training segment from the
corresponding parent state, the main reduction-guided protocol applies
sparse parent-state synchronization during the student rollout.
This follows the same general idea as the sparse teacher-forcing
procedure used in standard AL-RNN training
\citep{brenner2024almostlinear}, but the reference state is taken from
the autonomous parent trajectory rather than from the observed data.

Synchronization is applied every \(128\) time steps to the entire
latent state:
\begin{equation}
    \mathbf z^{\mathcal D}_{t}
    \leftarrow
    \mathbf z^{\mathrm{parent}}_{t}.
\end{equation}
Because the training-segment length is \(200\), each segment is
initialized from the corresponding parent state and contains one
additional synchronization event at \(t=128\).
Between synchronization events, the reduced model evolves
autonomously.

For the retained-guidance comparison condition, synchronization is
instead restricted to the readout coordinates and the retained ReLU
coordinates,
\begin{equation}
    \mathcal R_{\mathrm{sync}}
    =
    \{1,\ldots,N\}
    \cup
    \mathcal R,
\end{equation}
with
\begin{equation}
    \mathbf z^{\mathcal D}_{t,\mathcal R_{\mathrm{sync}}}
    \leftarrow
    \mathbf z^{\mathrm{parent}}_{t,\mathcal R_{\mathrm{sync}}},
\end{equation}
while the remaining coordinates continue to evolve freely.

Thus, sparse teacher forcing in direct training and parent-state
synchronization in reduction-guided retraining both periodically
realign the model trajectory, but use different reference signals.
Direct training uses the observed trajectory, whereas
reduction-guided retraining uses the autonomous parent trajectory.
No raw-data overwrite is applied during teacher-guided retraining.

\subsection{Retraining objectives}
\label{app:retraining_detail}

We consider four retraining conditions:
\begin{enumerate}
    \item no teacher guidance;
    \item full-preactivation guidance;
    \item retained-preactivation guidance;
    \item symbol guidance.
\end{enumerate}

The full-preactivation condition with fixed coordinate-equal weighting
and full-state synchronization is used as the main
reduction-guided retraining protocol.
The remaining conditions are evaluated as ablations.

\paragraph{No teacher guidance.}

The no-guidance condition retains the parent-derived parameter
initialization and the fixed linearization mask, but does not use the
autonomous parent trajectory during retraining.
Instead, it is trained on the original observed trajectory using the
same output objective and sparse teacher-forcing procedure as standard
AL-RNN training.
This condition therefore isolates the effect of parent-trajectory
guidance from that of parent initialization and structural reduction.

\paragraph{Output matching.}

All teacher-guided conditions use the readout-matching loss
\begin{equation}
    \mathcal{L}_{\mathrm{out}}
    =
    \frac{1}{TN}
    \sum_{\ell=1}^{T}
    \left\|
        \mathbf{x}^{\mathcal D}_{s+\ell}
        -
        \mathbf{x}^{\mathrm{parent}}_{s+\ell}
    \right\|_2^2,
    \label{eq:lout_appendix}
\end{equation}
where \(s\) denotes the starting time of the sampled segment in the
cached autonomous parent trajectory.

\paragraph{Full-preactivation guidance.}

The full-preactivation objective additionally matches all non-readout
latent coordinates:
\begin{equation}
    \mathcal{L}_{\mathrm{full}}
    =
    \frac{1}{T(M-N)}
    \sum_{\ell=1}^{T}
    \left\|
        \mathbf{z}^{\mathcal D}_{s+\ell,N:M}
        -
        \mathbf{z}^{\mathrm{parent}}_{s+\ell,N:M}
    \right\|_2^2.
    \label{eq:lfull_appendix}
\end{equation}
The total loss is
\begin{equation}
    \mathcal{L}
    =
    \mathcal{L}_{\mathrm{out}}
    +
    \lambda_{\mathrm{full}}
    \mathcal{L}_{\mathrm{full}}.
    \label{eq:full_total_loss}
\end{equation}
For the main protocol, we use the fixed coordinate-equal weighting
\begin{equation}
    \lambda_{\mathrm{full}}
    =
    \frac{M-N}{N}.
    \label{eq:lambda_full}
\end{equation}
Because both losses are normalized by their respective numbers of
coordinates, this choice gives each readout and non-readout coordinate
the same weight in the total squared-error objective.
For \(M=20\) and \(N=3\),
\(\lambda_{\mathrm{full}}=17/3\).

\paragraph{Retained-preactivation guidance.}

Let \(\mathcal R\) denote the retained ReLU coordinates after reduction.
The retained-preactivation loss is
\begin{equation}
    \mathcal{L}_{\mathrm{ret}}
    =
    \frac{1}{T|\mathcal{R}|}
    \sum_{\ell=1}^{T}
    \left\|
        \mathbf{z}^{\mathcal D}_{s+\ell,\mathcal R}
        -
        \mathbf{z}^{\mathrm{parent}}_{s+\ell,\mathcal R}
    \right\|_2^2,
    \label{eq:lret_appendix}
\end{equation}
with total loss
\begin{equation}
    \mathcal{L}
    =
    \mathcal{L}_{\mathrm{out}}
    +
    \lambda_{\mathrm{ret}}
    \mathcal{L}_{\mathrm{ret}}.
\end{equation}
Coordinates corresponding to linearized ReLU units and originally
linear latent variables receive no auxiliary state-matching loss.
Thus, this objective constrains only the nonlinear coordinates retained
after reduction while allowing the remaining linear latent coordinates to reorganize during retraining.

\paragraph{Symbol guidance.}

For each retained ReLU coordinate \(j\in\mathcal R\), define the parent
activation label
\begin{equation}
    b^{\mathrm{parent}}_{t,j}
    =
    \mathbb{I}
    \left[
        z^{\mathrm{parent}}_{t,j}>0
    \right],
    \qquad
    y_{t,j}
    =
    2b^{\mathrm{parent}}_{t,j}-1.
\end{equation}
We use the smooth margin loss
\begin{equation}
    \mathcal{L}_{\mathrm{sym}}
    =
    \frac{1}{T|\mathcal{R}|}
    \sum_{\ell=1}^{T}
    \sum_{j\in\mathcal R}
    \operatorname{softplus}
    \left(
        m
        -
        y_{s+\ell,j}
        z^{\mathcal D}_{s+\ell,j}
    \right),
    \label{eq:lsym_appendix}
\end{equation}
with margin
\begin{equation}
    m=1.
\end{equation}
The total loss is
\begin{equation}
    \mathcal{L}
    =
    \mathcal{L}_{\mathrm{out}}
    +
    \lambda_{\mathrm{sym}}
    \mathcal{L}_{\mathrm{sym}}.
\end{equation}

A systematic comparison of these retraining variants is reported in
Section~\ref{app:chua_guidance_ablation}.

\clearpage
\section{Evaluation Protocol and Statistics}
\label{app:evaluation}

% \subsection{Autonomous free-run evaluation}
% \label{app:free_run_evaluation}

% Each selected model is evaluated using an autonomous rollout of
% \(10{,}000\) time steps without teacher forcing or external state
% correction.
% The first \(1{,}000\) steps are discarded as transient for the
% structural evaluation.
% Both the visited fixed-point count \(Q_{\mathrm{vis}}\) and the
% post-transient symbolic complexity \(|\widehat{\Sigma}|\) are computed
% from the remaining \(9{,}000\) steps.
% The fidelity metrics \(E_{\mathrm{stsp}}\) and \(E_H\) are computed
% from the full \(10{,}000\)-step rollout.

% We define the post-transient visited-symbol set as
% %
% \begin{equation}
%     \widehat{\Sigma}
%     =
%     \left\{
%         \boldsymbol{\sigma}(\mathbf z_t)
%         :
%         1001 \leq t \leq 10000
%     \right\},
%     \label{eq:posttransient_symbols_appendix}
% \end{equation}
% %
% whose cardinality
% \(|\widehat{\Sigma}|\)
% measures the symbolic complexity realized by the long-run autonomous
% dynamics.

% The quantity \(Q_{\mathrm{vis}}\) denotes the number of admissible
% fixed points of the learned model whose symbolic regions are visited
% during the full \(10{,}000\)-step autonomous rollout.

\subsection{Autonomous free-run evaluation}
\label{app:free_run_evaluation}

Each selected model is evaluated using an autonomous rollout of
\(10{,}000\) time steps without teacher forcing or external state
correction.
The first \(1{,}000\) steps are discarded as transient for all final
evaluation quantities, including \(Q_{\mathrm{vis}}\),
\(|\widehat{\Sigma}|\), \(E_{\mathrm{stsp}}\), and \(E_{\mathrm H}\).

We define the post-transient visited-symbol set as
\begin{equation}
    \widehat{\Sigma}
    =
    \left\{
        \boldsymbol{\sigma}(\mathbf z_t)
        :
        1001 \leq t \leq 10000
    \right\},
    \label{eq:posttransient_symbols_appendix}
\end{equation}
whose cardinality
\(|\widehat{\Sigma}|\)
measures the symbolic complexity realized by the long-run autonomous
dynamics.

The visited fixed-point count \(Q_{\mathrm{vis}}\) is computed over the
same post-transient interval and counts admissible fixed points of the
learned model whose symbolic regions are visited during that interval.

\subsection{State-space and spectral fidelity}
\label{app:fidelity_metrics}

We use two complementary measures of autonomous dynamical fidelity, following the prior AL-RNN work \citep{brenner2024almostlinear}.
Both metrics are computed from the post-transient portion of the
autonomous rollout, after discarding the first \(1{,}000\) steps.

The state-space error is the Kullback--Leibler divergence
\begin{equation}
    E_{\mathrm{stsp}}
    =
    D_{\mathrm{KL}}
    \left(
        \widehat p_{\mathrm{target}}
        \,\|\,
        \widehat p_{\mathrm{gen}}
    \right),
    \label{eq:estsp_appendix}
\end{equation}
where \(\widehat p_{\mathrm{target}}\) and
\(\widehat p_{\mathrm{gen}}\) are histogram-based empirical
state-space distributions in the observed coordinates.
We use \(30\) bins per dimension over the coordinate ranges of the
target trajectory, with additive smoothing of \(10^{-5}\) applied to
each histogram cell before normalization.
Generated samples outside the target histogram range are excluded.
The target distribution is estimated from the full training
trajectory, whereas the generated distribution is estimated from the
post-transient \(9{,}000\)-step autonomous trajectory.

% As a complementary measure of temporal structure, we compute the
% spectral Hellinger distance \(E_{\mathrm H}\).
% For each observed coordinate, the post-transient time series is
% standardized, and its power spectrum is smoothed with a Gaussian
% kernel with a standard deviation of \(20\) frequency bins and
% normalized to unit mass.
% The Hellinger distance between the generated and target spectra is
% computed as
% %
% \begin{equation}
%     D_{\mathrm H}(p,q)
%     =
%     \sqrt{
%         1-
%         \sum_f
%         \sqrt{p_f q_f}
%     },
% \end{equation}
% %
% where \(p_f\) and \(q_f\) are the normalized power-spectrum values of
% the target and generated trajectories, respectively, at frequency bin
% \(f\), and the sum runs over all frequency bins.
% The spectral error \(E_{\mathrm H}\) is then obtained by averaging
% \(D_{\mathrm H}(p,q)\) across observed coordinates.

% For each observed coordinate, the generated post-transient segment
% (9{,}000 steps) and a target segment of equal length, taken as the
% first \(9{,}000\) samples of the training trajectory, are
% standardized to zero mean and unit variance, and their power spectra
% are computed by a real FFT.
% Using segments of identical length ensures that both spectra live on
% the same frequency grid, so no interpolation or bin matching is
% required.
% Each spectrum is smoothed with a Gaussian kernel of width
% \(\sigma = 20\) frequency bins and normalized to unit sum, the
% Hellinger distance between the two normalized spectra is computed per
% coordinate, and \(E_{\mathrm{H}}\) is the average over the \(N\)
% observed coordinates.

As a complementary measure of temporal structure, we compute the
spectral Hellinger distance \(E_{\mathrm H}\).
For each observed coordinate, we compare the generated post-transient
segment of \(9{,}000\) steps with a target segment of equal length,
taken as the first \(9{,}000\) samples of the training trajectory.
Each segment is standardized to zero mean and unit variance, and its
power spectrum is computed using a real FFT.
Because the generated and target segments have identical lengths, their
spectra share the same frequency grid and require no interpolation or
bin matching.
Each spectrum is smoothed with a Gaussian kernel with standard
deviation \(\sigma=20\) frequency bins and normalized to unit mass.

The Hellinger distance between the normalized target and generated
spectra is
\begin{equation}
    D_{\mathrm H}(p,q)
    =
    \sqrt{
        1-
        \sum_f
        \sqrt{p_f q_f}
    },
\end{equation}
where \(p_f\) and \(q_f\) denote the normalized target and generated
power spectra, respectively, at frequency bin \(f\).
The spectral error \(E_{\mathrm H}\) is obtained by averaging
\(D_{\mathrm H}(p,q)\) over the \(N\) observed coordinates.

Because \(E_{\mathrm{stsp}}\) and \(E_{\mathrm H}\) measure complementary aspects
of the autonomous dynamics, only \(E_{\mathrm{stsp}}\) enters the
high-fidelity success criterion.
The spectral error \(E_{\mathrm H}\) is reported as an independent diagnostic.

\subsection{Chua success criteria}
\label{app:minimal_eval}

For the 3-scroll Chua benchmark, the target fixed-point count is
\(Q=5\).
We first define fixed-point-count success as
\begin{equation}
    S_{\mathrm{FP}}
    =
    \mathbb{I}
    \left[
        Q_{\mathrm{vis}}=5
    \right].
    \label{eq:sfp_appendix}
\end{equation}
Thus, \(S_{\mathrm{FP}}=1\) indicates that the autonomous trajectory
visits symbolic regions containing exactly five admissible fixed
points of the learned model.

Our primary structural criterion is minimal-realization success,
\begin{equation}
    S_{\min}
    =
    \mathbb{I}
    \left[
        Q_{\mathrm{vis}}=5
        \;\land\;
        |\widehat{\Sigma}|=5
    \right].
    \label{eq:smin_appendix}
\end{equation}
For Chua, this criterion requires both the target number of visited
admissible fixed points and the certified minimum of five
post-transient symbolic regions.

To additionally require faithful state-space dynamics, we define
\begin{equation}
    S_{\mathrm{fid}}
    =
    \mathbb{I}
    \left[
        S_{\min}=1
        \;\land\;
        E_{\mathrm{stsp}}<2.0
    \right].
    \label{eq:sfid_appendix}
\end{equation}
The threshold \(E_{\mathrm{stsp}}=2.0\) is used as a conservative
representative fidelity criterion for the Chua benchmark.
Its sensitivity is examined separately in
Appendix~\ref{app:chua_fidelity_threshold}.

\subsection{Parent-balanced statistical aggregation}
\label{app:seed_statistics}

Different source parents can generate different numbers of selected
reduction candidates.
Pooling all candidates would therefore give greater weight to parent
models that happen to generate more descendants.
We instead treat the independently trained source parent as the primary
statistical unit.

Let \(\mathcal C_r\) denote the set of candidates selected
\emph{before retraining} from parent seed \(r\).
For a binary success indicator \(S\), we define the within-parent
success fraction as
\begin{equation}
    q_r(S)
    =
    \frac{1}{|\mathcal C_r|}
    \sum_{c\in\mathcal C_r}
    S(c),
    \qquad
    |\mathcal C_r|>0.
    \label{eq:q_seed_appendix}
\end{equation}
Candidates that fail during retraining or evaluation
remain in \(\mathcal C_r\) and are assigned \(S(c)=0\).
If a parent produces no candidate satisfying the selection criterion,
we set
\begin{equation}
    q_r(S)=0.
\end{equation}

Let \(\mathcal P\) denote the set of the \(30\) independently trained
parent seeds.
The parent-balanced seed-macro success rate is
\begin{equation}
    \bar q(S)
    =
    \frac{1}{|\mathcal P|}
    \sum_{r\in\mathcal P}
    q_r(S).
    \label{eq:q_macro_appendix}
\end{equation}
Thus, each independently trained parent contributes equal total weight,
irrespective of the number of reduction candidates it generates.

For continuous candidate-level quantities such as
\(E_{\mathrm{stsp}}\), each successfully evaluated candidate retains
the weight determined by the number of candidates originally selected
from its parent.
Let
\(\mathcal C_r^{\mathrm{eval}}\subseteq\mathcal C_r\)
denote the candidates from parent \(r\) for which a valid continuous
measurement is available.
The corresponding parent-balanced empirical cumulative distribution is
\begin{equation}
    F_{\mathrm{bal}}(e)
    =
    \frac{
        \displaystyle
        \sum_{r\in\mathcal P}
        \sum_{c\in\mathcal C_r^{\mathrm{eval}}}
        \frac{1}{|\mathcal C_r|}
        \mathbb I[E_{r,c}\leq e]
    }{
        \displaystyle
        \sum_{r\in\mathcal P}
        \frac{
            |\mathcal C_r^{\mathrm{eval}}|
        }{
            |\mathcal C_r|
        }
    }.
    \label{eq:parent_balanced_ecdf}
\end{equation}
%
% Candidates without a valid continuous measurement are omitted without
% renormalizing the weights of their successfully evaluated siblings.
% Consequently, a parent's realized weight is proportional to the
% fraction of its originally selected candidates that yield valid
% measurements.
Candidates with missing or non-finite continuous measurements are
excluded from both the numerator and denominator, without
renormalizing the weights of their successfully evaluated siblings.

Directly trained models and higher-capacity parent models contribute
one observation per independent seed and therefore require no
additional candidate-level weighting.

% ============================================================
% E. ADDITIONAL CHUA RESULTS
% ============================================================

\clearpage
\section{Additional 3-scroll Chua Results}
\label{app:chua_results}

This section provides additional results for the 3-scroll Chua
benchmark that complement the main-text analysis.
We first summarize the structural outcomes of hierarchical reduction,
then compare alternative retraining protocols, examine the robustness
of the state-space fidelity criterion, and finally analyze how the
symbolic transition structure changes after retraining.

\subsection{Structural outcomes of hierarchical reduction}
\label{app:chua_reduction_results}

% Hierarchical reduction of the \(P_{\mathrm{parent}}=10\) parent
% models produces multiple fixed-point-preserving quotient candidates
% with different combinations of nonlinear capacity
% \(P_{\mathcal D}\) and symbolic complexity
% \(|\Sigma_{\mathcal D}|\).
We apply hierarchical reduction to \(30\) independently trained
parent models with \(P_{\mathrm{parent}}=10\), producing multiple
fixed-point-preserving quotient candidates with different combinations
of nonlinear capacity \(P_{\mathcal D}\) and symbolic complexity
\(|\Sigma_{\mathcal D}|\).
For the 3-scroll Chua system, the theoretical lower bounds are
\begin{equation}
    |\Sigma_{\mathcal D}|\geq 5,
    \qquad
    P_{\mathcal D}\geq 3.
\end{equation}

Among the selected five-symbol quotient candidates, \(126\) attain
both bounds simultaneously,
\begin{equation}
    |\Sigma_{\mathcal D}|=5,
    \qquad
    P_{\mathcal D}=3,
\end{equation}
whereas an additional \(15\) candidates attain the certified
five-symbol bound with \(P_{\mathcal D}=4\).
The latter are therefore minimal in symbolic complexity but do not
attain the nonlinear-capacity lower bound.

These results also illustrate that the two notions of complexity are
distinct.
A quotient graph can already attain the minimum possible number of
symbols while retaining more nonlinear units than required by the
capacity lower bound.
For the main comparison with direct minimal-capacity training, we
therefore restrict the reduction-guided models to the
\(P_{\mathcal D}=3\), \(|\Sigma_{\mathcal D}|=5\) candidates, so that
both methods are evaluated at the same final nonlinear capacity
\(P=3\).

\FloatBarrier

\subsection{Reduction-guided retraining ablations}
\label{app:chua_guidance_ablation}

\begin{figure}[tb]
    \centering
    \includegraphics[
        width=\textwidth
    ]{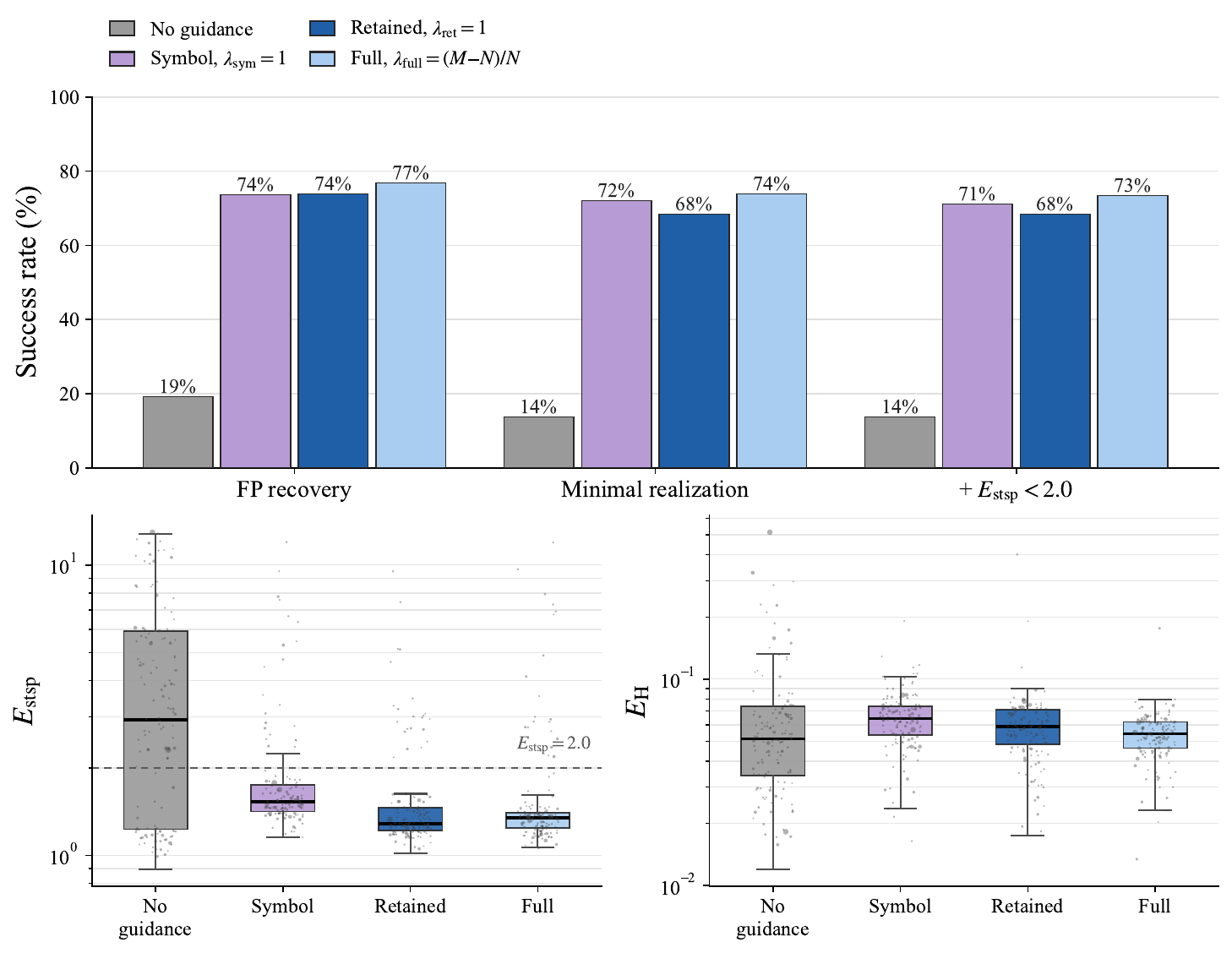}
    \caption{
    \textbf{Ablation of reduction-guided retraining on the 3-scroll Chua
    system.}
    All methods use the same selected five-symbol reduction candidates
    and parent-derived initialization.
    \textbf{Top:} parent-balanced seed-macro success rates for
    fixed-point recovery, minimal realization, and high-fidelity
    minimal realization
    (\(S_{\mathrm{fid}}=1\), requiring
    \(E_{\mathrm{stsp}}<2.0\)).
    Full-preactivation guidance achieves \(77\%\), \(74\%\), and \(73\%\),
    respectively, compared with \(19\%\), \(14\%\), and \(14\%\)
    without parent-trajectory guidance.
    Symbol and retained-preactivation guidance also substantially improve
    success rates.
    \textbf{Bottom left:} post-transient state-space error
    \(E_{\mathrm{stsp}}\); the dashed line marks the
    \(E_{\mathrm{stsp}}=2.0\) high-fidelity threshold.
    \textbf{Bottom right:} post-transient spectral Hellinger error \(E_{\mathrm H}\),
    shown as a complementary dynamical diagnostic.
    Points denote individual candidates with valid finite
    measurements, and boxplots summarize the corresponding parent-balanced distributions.
    }
    \label{fig:chua_guidance_ablation}
\end{figure}

We next examine how the form of parent-derived guidance affects the
realization of compact Chua models.
For each reduction candidate, all methods start from the same
parent-derived parameter initialization and the same reduced
architecture, differing only in the form of guidance and
synchronization used during retraining.
The corresponding retraining objectives and synchronization schemes
are detailed in Appendix~\ref{app:retraining_detail}.
We compare training without parent-trajectory guidance against three
guided variants: symbol guidance with
\(\lambda_{\mathrm{sym}}=1\), retained-preactivation guidance with
\(\lambda_{\mathrm{ret}}=1\), and full-preactivation guidance with the
coordinate-equal weighting
\(\lambda_{\mathrm{full}}=(M-N)/N\).
We use the latter protocol for the main results.

Unlike the capacity-matched main comparison
(\(P_{\mathcal D}=3\); 126 candidates), this ablation uses all 141
selected five-symbol candidates, including 15 candidates with
\(P_{\mathcal D}=4\) contributed by the two parents whose reduction
trees reach \(|\Sigma_{\mathcal D}|=5\) only at
\(P_{\mathcal D}=4\).
The success rates reported here therefore differ slightly from those
in the capacity-matched main comparison.

As shown in \figref{fig:chua_guidance_ablation}, parent-derived
trajectory guidance provides a strong optimization scaffold after
structural reduction.
Without trajectory guidance, fixed-point recovery reaches only
\(19\%\), and both minimal and high-fidelity minimal realizations are
obtained in approximately \(14\%\) of parent-balanced trials.
All three guided variants substantially improve these rates.
Symbol guidance yields success rates of \(74\%\), \(72\%\), and
\(71\%\), while retained-preactivation guidance yields
\(74\%\), \(68\%\), and \(68\%\), respectively.
Full-preactivation guidance gives the highest rates across the three
criteria, at \(77\%\), \(74\%\), and \(73\%\).

The state-space fidelity measures show the same qualitative effect.
Without guidance, \(E_{\mathrm{stsp}}\) is broadly distributed and
frequently exceeds the high-fidelity threshold.
Guided retraining shifts the distribution toward substantially lower
state-space error, with the retained- and full-preactivation variants showing
particularly concentrated distributions near the low-error regime.
The improvement is therefore not limited to recovering the desired
fixed-point count or symbolic complexity; parent guidance also makes
the resulting autonomous dynamics more faithful to the target
attractor.

In contrast, the \(E_{\mathrm H}\) distributions overlap more strongly across
the guided variants.
This is consistent with \(E_{\mathrm H}\) capturing a different aspect of
dynamical fidelity from \(E_{\mathrm{stsp}}\), rather than simply
tracking the state-space reconstruction criterion.
We therefore retain \(E_{\mathrm H}\) as a complementary diagnostic and use
\(E_{\mathrm{stsp}}\) alone in the binary high-fidelity criterion.

Overall, several forms of parent-derived guidance are effective, and
the differences among the guided variants are considerably smaller
than the gap between guided and unguided retraining.
We use full-preactivation guidance with
\(\lambda_{\mathrm{full}}=(M-N)/N\) as the main protocol because it
has a simple, directly interpretable coordinate-equal weighting,
requires no additional calibration, and performs consistently well
across structural and fidelity criteria.

\subsection{State-space fidelity threshold}
\label{app:chua_fidelity_threshold}

The main analysis uses \(E_{\mathrm{stsp}}<2.0\) as a conservative
criterion for high-fidelity realization.
This threshold is used as a practical criterion for identifying
clearly reconstructed 3-scroll attractors, rather than as a
theoretically defined dynamical boundary.
As illustrated in Fig.~\ref{fig:chua_estsp_examples}, models with
\(E_{\mathrm{stsp}}<2.0\) generally retain well-formed 3-scroll
geometries, whereas larger values increasingly correspond to
distorted or incomplete attractors.

\begin{figure}[t]
    \centering
    \includegraphics[
        width=\textwidth
    ]{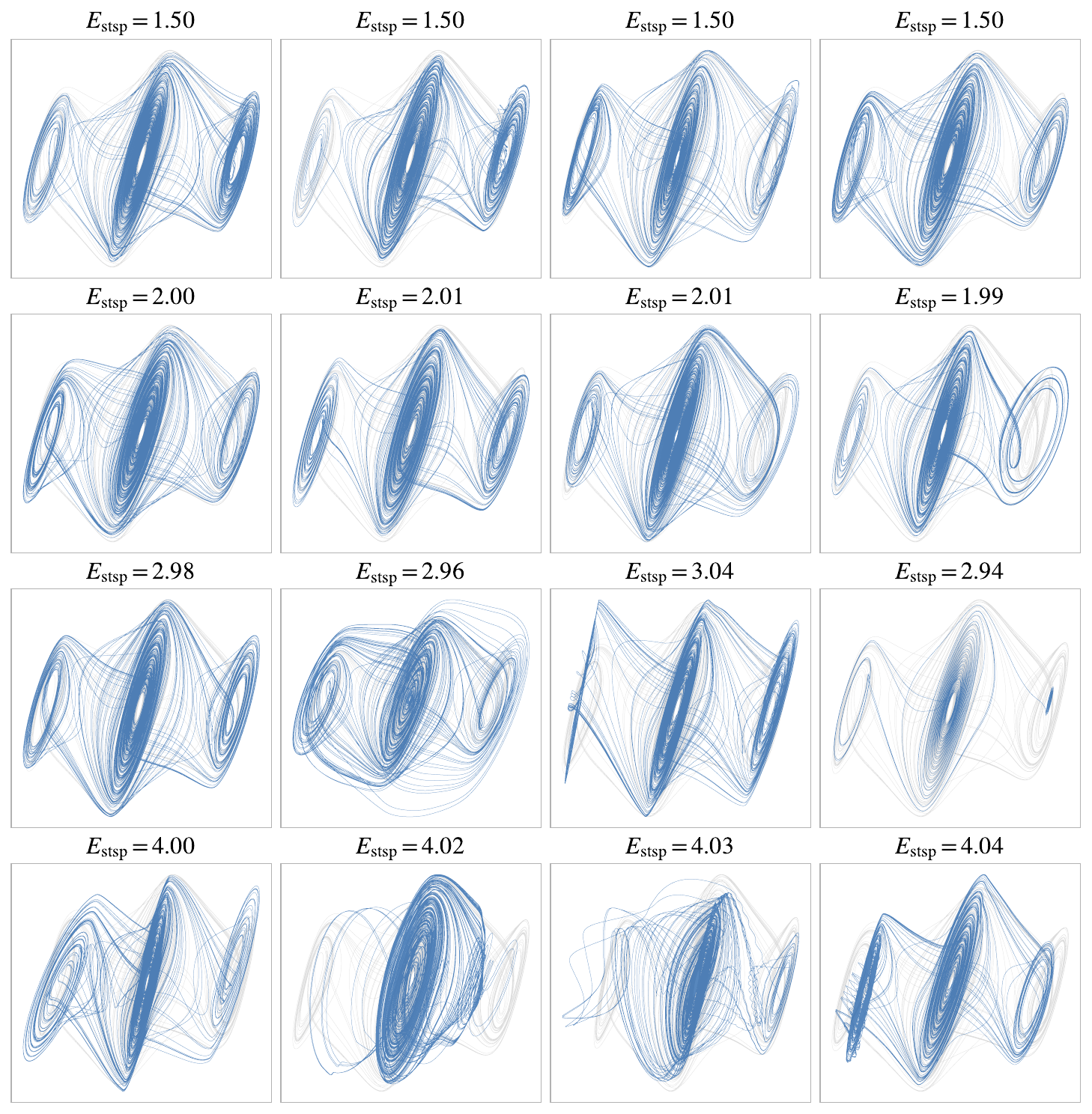}
    \caption{
    \textbf{Representative autonomous 3-scroll Chua trajectories across
    state-space reconstruction errors.}
    Gray trajectories show the ground-truth attractor, and blue
trajectories show the corresponding autonomous model outputs.
From top to bottom, the rows contain four representative models with
\(E_{\mathrm{stsp}}\approx 1.5\), \(2.0\), \(3.0\), and \(4.0\),
respectively; the exact \(E_{\mathrm{stsp}}\) value of each model is
shown above its panel.
Models with \(E_{\mathrm{stsp}} < 2.0\) generally preserve the
overall 3-scroll geometry, whereas larger errors increasingly include
distorted, imbalanced, or incomplete attractors.
These examples illustrate the practical motivation for using
\(E_{\mathrm{stsp}}<2.0\) as the high-fidelity criterion.
    }
    \label{fig:chua_estsp_examples}
\end{figure}

To assess whether the quantitative conclusions depend on this
particular cutoff, we evaluate
\begin{equation}
    H(\tau)
    =
    \Pr
    \left(
        S_{\min}=1
        \land
        E_{\mathrm{stsp}}<\tau
    \right)
\end{equation}
over a range of fidelity thresholds \(\tau\).

\begin{figure}[t]
    \centering
    \includegraphics[
        width=0.7\textwidth
    ]{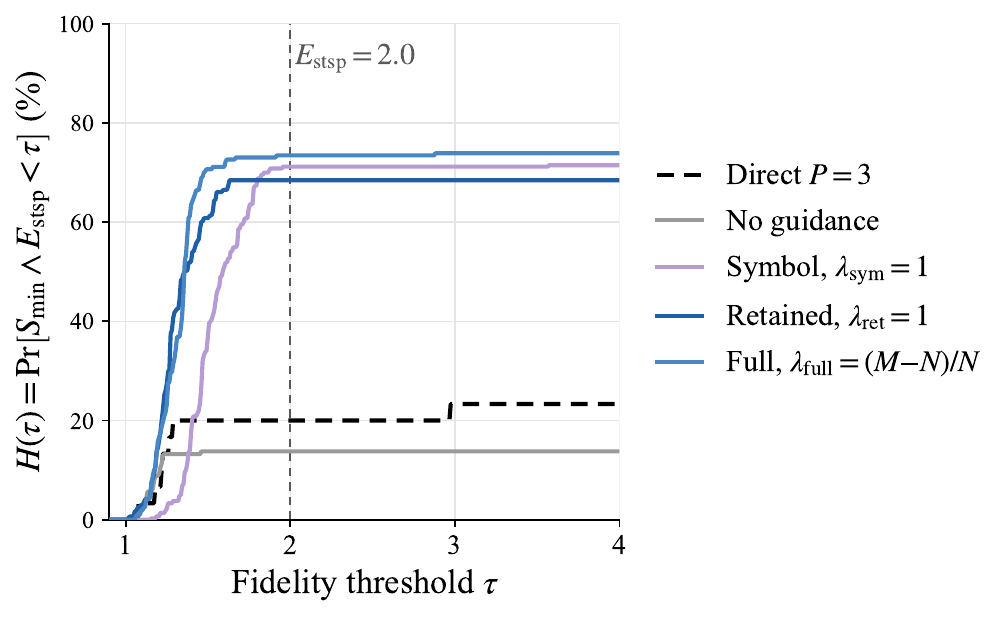}
    \caption{
    \textbf{Sensitivity of high-fidelity minimal-realization rates to
    the state-space-error threshold on the Chua system.}
    The curves show
    \(H(\tau)=
    \Pr[S_{\min}=1 \land E_{\mathrm{stsp}}<\tau]\)
    for direct minimal-capacity training, retraining without
    parent-trajectory guidance, and the three guided retraining
    variants.
    The vertical dashed line marks the threshold
    \(E_{\mathrm{stsp}}=2.0\) used in the main analysis.
    Reduction-guided retraining maintains substantially higher success
    rates than direct \(P=3\) training over a broad range of
    thresholds, and the guided curves are already close to their
    plateaus by \(\tau=2.0\).
    }
    \label{fig:chua_fidelity_threshold}
\end{figure}

As shown in Fig.~\ref{fig:chua_fidelity_threshold}, the qualitative
comparison is insensitive to the precise choice of fidelity threshold.
Direct training at the minimal nonlinear capacity \(P=3\) remains
substantially below the guided retraining variants over a broad range
of \(\tau\).
Likewise, retraining without parent-trajectory guidance remains at
low \(H(\tau)\) values, whereas all three guided variants rise sharply
as the fidelity threshold is relaxed.

At the threshold used in the main analysis,
\(\tau=2.0\), full-preactivation guidance reaches approximately \(73\%\)
high-fidelity minimal realizations, compared with approximately
\(20\%\) for direct \(P=3\) training.
Symbol and retained-preactivation guidance yield similarly high rates of
approximately \(71\%\) and \(68\%\), respectively.
Thus, the advantage of reduction-guided retraining over direct
minimal-capacity training is not specific to the particular cutoff
\(E_{\mathrm{stsp}}=2.0\).

Importantly, the separation between direct training and guided
retraining is already evident below \(\tau=2.0\) and persists for more
permissive thresholds.
Moreover, the guided curves are close to their asymptotic plateaus
around the selected threshold.
The reported improvement is therefore not caused by a narrow cluster
of models lying immediately on opposite sides of the chosen cutoff.

% We additionally use the spectral Hellinger error \(E_{\mathrm H}\) as a
% complementary diagnostic of autonomous dynamical fidelity.
% As shown in Fig.~\ref{fig:chua_guidance_ablation}, the
% \(E_{\mathrm H}\) distributions overlap more strongly across retraining
% conditions than the corresponding \(E_{\mathrm{stsp}}\)
% distributions.
% Thus, \(E_{\mathrm{stsp}}\) and \(E_{\mathrm H}\) capture different aspects of
% the generated dynamics rather than providing redundant measures.
% We therefore use \(E_{\mathrm{stsp}}\) for the binary high-fidelity
% criterion and report \(E_{\mathrm H}\) as a complementary continuous diagnostic.

\FloatBarrier

\subsection{Emergence of a common chain-like symbolic organization}
\label{app:chua_chain_structure}

An additional structural regularity emerges after reduction-guided
retraining.
Although the selected quotient candidates can differ substantially in
their pre-retraining transition graphs, every high-fidelity minimal
realization (\(S_{\mathrm{fid}}=1\)) that we observed exhibits the same
five-node chain-like symbolic organization in its autonomous
dynamics.

\begin{figure}[t]
    \centering
    \includegraphics[
        width=\textwidth
    ]{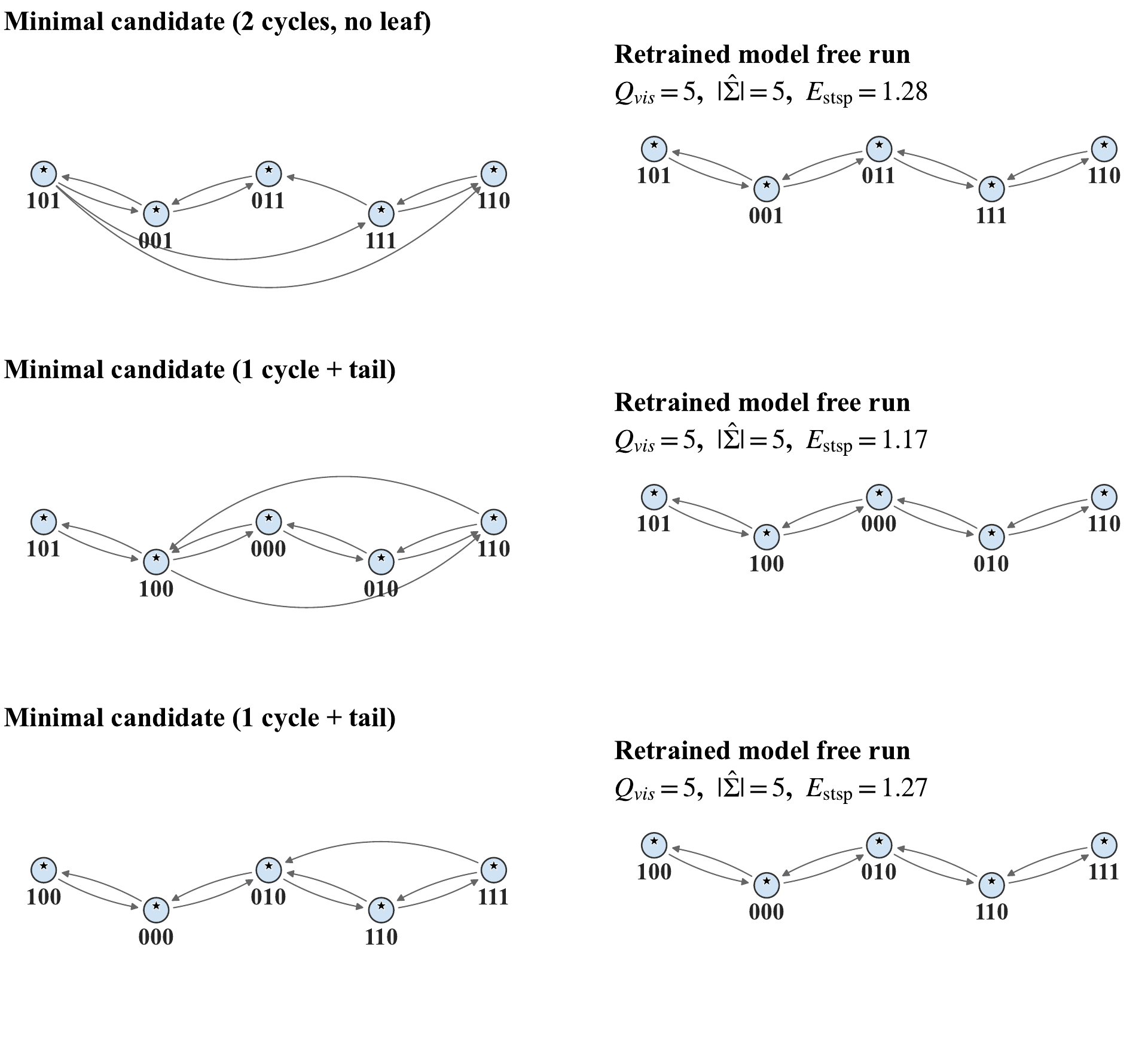}
    \caption{
    \textbf{Reorganization of non-chain quotient candidates into a
    common five-node chain after retraining.}
    The left column shows representative selected quotient graphs
    before retraining, including a two-cycle topology without a leaf
    and one-cycle-plus-tail topologies.
    The right column shows the corresponding autonomous transition
    graphs after reduction-guided retraining.
    Despite the different predicted quotient structures, all three
    retrained models realize
    \(Q_{\mathrm{vis}}=5\) and
    \(|\widehat{\Sigma}|=5\), with low post-transient state-space error
    (\(E_{\mathrm{stsp}}=1.17\text{--}1.28\)),
    and converge to the same path-like five-node organization.
    Stars indicate symbols associated with admissible fixed points.
    }
    \label{fig:chua_chain_structure}
\end{figure}

\Figref{fig:chua_chain_structure} illustrates representative
examples.
The selected quotient candidates on the left have transition graphs
that differ structurally from the final realized organization,
including cases with two cycles and no leaf and cases with one cycle
plus a tail.
After retraining, however, the corresponding autonomous models all
preserve the five-symbol minimal realization
(\(Q_{\mathrm{vis}}=5\), \(|\widehat{\Sigma}|=5\)) and reorganize into
the same chain-like transition structure.

Importantly, this chain-like organization is not explicitly imposed by
either stage of the pipeline.
The reduction procedure constrains only which activation-pattern
mergers are admissible under fixed-point preservation, and the
retraining objective matches continuous trajectories rather than
directly optimizing graph topology.
The common chain-like structure therefore emerges from the autonomous
dynamics learned after retraining.

This observation clarifies the role of the quotient graph.
A fixed-point-preserving quotient should be interpreted as a
structurally admissible starting point for retraining, rather than as
a literal prediction of the final realized transition topology.
Different non-chain quotient candidates can reorganize during
retraining into the same chain-like symbolic structure while
preserving the five target fixed-point symbols and reproducing the
3-scroll attractor faithfully.

For the 3-scroll Chua system, all \(n=114\) observed high-fidelity minimal candidate runs converge to the same five-node chain-like organization,
revealing a common symbolic skeleton of the target dynamics.

\FloatBarrier
% \subsection{Symbolic transition structure after retraining}
% \label{app:chua_transition_structure}

% The quotient graph generated by hierarchical reduction specifies a
% structural candidate before retraining; it does not constrain the
% retrained model to preserve exactly the same set of realized
% transitions.
% Because the model parameters remain trainable, retraining can alter
% which transitions are visited during autonomous dynamics while
% preserving the reduced nonlinear architecture.

% % TODO: replace the following paragraph after recomputing this analysis
% % with the final full-preactivation-guidance retraining protocol.
% %
% % Among high-fidelity minimal realizations, ...
% % [insert fraction / count exhibiting the characteristic five-node
% % path-like organization].
% %
% % Representative examples show that a quotient candidate whose
% % pre-retraining transition graph differs from this organization can
% % reorganize during retraining while retaining five visited symbols
% % and all five target fixed points.

% This distinction is important for interpreting the reduction tree.
% Hierarchical reduction identifies fixed-point-preserving symbolic
% structures that can serve as starting points for reduced models, but
% the quotient graph itself is not claimed to determine the final
% transition topology.
% The latter is an empirical property of the retrained autonomous
% dynamics.

\clearpage

\end{document}